\newtheorem{theorem}{Theorem}
\newtheorem{lemma}{Lemma}
\newtheorem{corollary}{Corollary}
\DeclareMathOperator*{\argmax}{arg\,max}
\DeclareMathOperator*{\argmin}{arg\,min}
\newcommand{\vectw}[2]{(#1,\: #2)}
\newcommand{\vecth}[3]{(#1,\: #2,\: #3)}
\newcommand{\learn}{\left(\pi\right)}
\newcommand{\expert}{\left(\pi^*\right)}
\title{\LARGE \bf
ADAPS: Autonomous Driving Via Principled Simulations
}
\author{Weizi Li$^{1}$, David Wolinski$^{1}$, and Ming C. Lin$^{1,2}$
\thanks{$^{1}$W. Li, D. Wolinski, M. Lin are with the Department of Computer Science,
        University of North Carolina at Chapel Hill, NC, USA
        {\tt\small \{weizili,dwolinsk,lin\}@cs.unc.edu}}%
\thanks{$^{2}$M. Lin is now with the Department of Computer Science, University of Maryland at College Park,
        MD, USA
        {\tt\small lin@cs.umd.edu}}%
}
\begin{document}

\maketitle
\thispagestyle{empty}
\pagestyle{empty}

\begin{abstract}
Autonomous driving has gained significant advancements in recent years. However, obtaining a robust control policy for driving remains challenging as it requires training data from a variety of scenarios, including rare situations (e.g., accidents), an effective policy architecture, and an efficient learning mechanism. We propose ADAPS for producing robust control policies for autonomous vehicles. ADAPS consists of two simulation platforms in generating and analyzing accidents to automatically produce labeled training data, and a memory-enabled hierarchical control policy. Additionally, ADAPS offers a more efficient online learning mechanism that reduces the number of iterations required in learning compared to existing methods such as DAGGER~\cite{ross2011reduction}. We present both theoretical and experimental results. The latter are produced in simulated environments, where qualitative and quantitative results are generated to demonstrate the benefits of ADAPS.
\end{abstract}


\section{Introduction}
\label{sec:intro}

Autonomous driving consists of many complex sub-tasks that consider the dynamics of an environment and often lack accurate definitions of various driving behaviors. These characteristics lead to conventional control methods to suffer subpar performance on the task~\cite{ratliff2009learning,silver2010learning}. However, driving and many other tasks can be easily demonstrated by human experts. This observation inspires \textit{imitation learning}, which leverages expert demonstrations to synthesize a controller. 

While there are many advantages of using imitation learning, it also has drawbacks. For autonomous driving, the most critical one is \textit{covariate shift}, meaning the training and test distributions are different. This could lead autonomous vehicles (AVs) to accidents since a learned policy may fail to respond to unseen scenarios including those dangerous situations that do not occur often. 

In order to mitigate this issue, the training dataset needs to be augmented with more expert demonstrations covering a wide spectrum of driving scenarios---especially ones of significant safety threats to the passengers---so that a policy can learn how to recover from its own mistakes. This is emphasized by Pomerleau~\cite{pomerleau1989alvinn}, who synthesized a neural network based controller for AVs: ``the network must not solely be shown examples of accurate driving, but also how to recover (i.e. return to the road center) once a mistake has been made.'' 

Although critical, obtaining recovery data from accidents in the physical world is impractical due to the high cost of a vehicle and potential injuries to both passengers and pedestrians. In addition, even one managed to collect accident data, human experts are usually  needed to label them, which is inefficient and may subject to judgmental errors~\cite{ross2013learning}.

These difficulties naturally lead us to the virtual world, where accidents can be simulated and analyzed~\cite{Survey:2019}. We have developed ADAPS (\textbf{A}utonomous \textbf{D}riving Vi\textbf{a} \textbf{P}rincipled \textbf{S}imulations) to achieve this goal. ADAPS consists of two simulation platforms and a memory-enabled hierarchical control policy based on deep neural networks (DNNs). The first simulation platform, referred to as \textit{SimLearner}, runs in a 3D environment and is used to test a learned policy, simulate accidents, and collect training data. The second simulation platform, referred to as \textit{SimExpert}, acts in a 2D environment and serves as the ``expert'' to analyze and resolve an accident via \textit{principled simulations} that can plan alternative safe trajectories for a vehicle by taking its physical, kinematic, and geometric constraints into account.


Furthermore, ADAPS represents a more efficient online learning mechanism than existing methods such as DAGGER~\cite{ross2011reduction}. This is useful consider learning to drive requires iterative testing and update of a control policy. Ideally, we want to obtain a robust policy using minimal iterations since one iteration corresponds to one incident. This would require the generation of training data at each iteration to be \textit{accurate}, \textit{efficient}, and \textit{sufficient} so that a policy can gain a large improvement going into the next iteration. ADAPS can assist to achieve this goal.

The \textbf{main contributions} of this research are specifically: (1) The accidents generated in \textit{SimLearner} will be analyzed by \textit{SimExpert} to produce alternative safe trajectories. (2) These trajectories will be automatically processed to generate a large number of annotated and segmented training data. Because \textit{SimExpert} is parameterized and has taken the physical, kinematic, and geometric constraints of a vehicle into account (i.e., principled), the resulting training examples are more heterogeneous than data collected via running a learned policy multiple times and are more effective than data collected through random sampling. (3) We present both theoretical and experimental results to demonstrate that ADAPS is an efficient online learning mechanism.

The Appendix, which contains supporting material, can be found at~\url{http://gamma.cs.unc.edu/ADAPS/}.

\section{Related Work}
\label{sec:related}
We sample previous studies that are related to each aspect of our framework and discuss the differences within.

\textbf{Autonomous Driving}. Among various methods to plan and control an AV~\cite{schwarting2018planning}, we focus on end-to-end imitation learning as it can avoid manually designed features and lead to a more compact policy compared to conventional mediation perception approaches~\cite{chen2015deepdriving}. The early studies done by Pomerleau~\cite{pomerleau1989alvinn} and LeCun et al.~\cite{LeCun2006off} have shown that neural networks can be used for an AV to achieve lane-following and off-road obstacle avoidance. Due to the advancements of deep neural networks (DNNs), a number of studies have emerged~\cite{Bojarski2016,Xu2017end,pan2017agile,codevilla2017end}. While significant improvements have been made, these results mainly inherit normal driving conditions and restrict a vehicle to the  lane-following behavior~\cite{codevilla2017end}. Our policy, in contrast, learns from accidents and enables a vehicle to achieve \textit{on-road} collision avoidance with both static and dynamic obstacles.

\textbf{Hierarchical Control Policy}. There have been many efforts in constructing a hierarchical policy to control an agent at different stages of a task~\cite{barto2003recent}.
Example studies include the options framework~\cite{sutton1999between} and transferable motor skills~\cite{konidaris2012robot}.
When combined with DNNs, the hierarchical approach has been adopted for virtual characters to learn locomotion tasks~\cite{levine2013guided}.
In these studies, the goal is to discover a hierarchical relationship from complex sensorimotor behaviors. We apply a hierarchical and memory-enabled policy to autonomous driving based on multiple DNNs. Our policy enables an AV to continuously categorize the road condition as safe or dangerous, and execute corresponding control commands to achieve accident-free driving.

\textbf{Generative Policy Learning}. Using \textit{principled simulations} to assist learning is essentially taking a \textit{generative model} approach. Several studies have adopted the same philosophy to learn (near-)optimal policy, examples including function approximations~\cite{gordon1995stable}, Sparse Sampling~\cite{kearns2002sparse}, and Fitted Value Iteration~\cite{szepesvari2005finite}. These studies leverage a generative model to \textit{stochastically} generate training samples. The emphasize is to simulate the feedback from an environment instead of the dynamics of an agent assuming the \textit{reward function is known}. Our system, on the other hand, does not assume any reward function of a driving behavior but models the physical, kinematic, and geometric constraints of a vehicle, and uses simulations to plan their trajectories w.r.t. environment characteristics. In essence, our method learns from expert demonstrations rather than self-exploration~\cite{lin1992self} as of the previous studies.

%

\section{Preliminaries}
\label{sec:prelim}

Autonomous driving is a \textit{sequential prediction} and \textit{controlled} (SPC) task, for which a system must predict a sequence of control commands based on inputs that depend on past predicted control commands. Because the control and prediction processes are intertwined, SPC tasks often encounter \textit{covariate shift}, meaning the training and test distributions vary. In this section, we will  first introduce notation and definitions to formulate an SPC task and then briefly discuss its existing solutions.

\subsection{Notation and Definitions}
\label{sec:notation}
The problem we consider is a $ T $-step control task. Given the observation $ \phi=\phi(s) $ of a state $ s $ at each step $ t \in [\![1, T]\!] $, the goal of a learner is to find a policy $ \pi \in \Pi $ such that its produced action $ a=\pi(\phi)$ will lead to the minimal cost:

\vspace{-0.8em}

\begin{equation}
\hat{\pi} = \argmin_{\pi \in \Pi} \sum_{t=1}^{T} C\left(s_{t}, a_{t}\right),
\label{eq:cost1}
\end{equation}

\noindent where $ C\left(s,a\right) $ is the expected immediate cost of performing $ a $ in  $ s $. For many tasks such as driving, we may not know the true value of $ C$. So, we instead minimize the observed surrogate loss $ l(\phi,\pi, a^*) $, which is assumed to upper bound $ C $, based on the approximation of the learner's action $ a=\pi(\phi) $ to the expert's action $ a^*=\pi^*(\phi) $. We denote the distribution of observations at $ t $ as $ d_{\pi}^t $, which is the result of executing $ \pi $ from $ 1 $ to $ t-1 $. Consequently, $ d_{\pi}=\frac{1}{T}\sum_{t=1}^{T}d_{\pi}^t $ is the average distribution of observations by executing $ \pi $ for $ T $ steps. Our goal is to solve an SPC task by obtaining $ \hat{\pi} $ that minimizes the observed surrogate loss under its own induced observations w.r.t. expert's actions in those observations:

\vspace{-0.8em}

\begin{equation}
\hat{\pi} = \argmin_{\pi \in \Pi} \mathbb{E}_{\phi \sim d_{\pi}, a^* \sim \pi^*(\phi)} \left[l\left(\phi,\pi, a^*\right)\right].
\end{equation}

\noindent We further denote $ \epsilon =  \mathbb{E}_{\phi \sim d_{\pi^*}, a^* \sim \pi^*(\phi)} \left[l\left(\phi,\pi, a^*\right)\right] $ as the expected loss under the training distribution induced by the expert's policy $ \pi^* $, and the cost-to-go over $ T $ steps of $ \hat{\pi} $ as $ J\left(\hat{\pi}\right) $ and of $ \pi^* $ as $ J\left(\pi^*\right) $. It has been shown that by simply treating expert demonstrations as i.i.d. samples the discrepancy between $ J\left(\hat{\pi}\right) $ and $ J\left(\pi^*\right) $ is $ \mathcal{O}(T^2\epsilon) $~\cite{syed2010reduction,ross2011reduction}. Given the error of a typical supervised learning is $ \mathcal{O}\left(T\epsilon\right) $, this demonstrates the additional cost due to covariate shift when solving an SPC task via standard supervised learning\footnote{The proofs regarding results $ \mathcal{O}(T^2\epsilon)$ and $ \mathcal{O}(T\epsilon)$ can be found in Appendix~\ref{sec:app-proofs}.}.

\subsection{Existing Techniques}
\label{sec:dagger}
Several approaches have been proposed to solve SPC tasks using supervised learning while keeping the error growing linearly instead of quadratically with $ T $~\cite{syed2010reduction,ross2011reduction,daume2009search}. Essentially, these methods reduce an SPC task to online learning. By further leveraging interactions with experts and no-regret algorithms that have strong guarantees on convex loss functions~\cite{kakade2009generalization}, at each iteration, these methods train one or multiple policies using standard supervised learning and improve the trained policies as the iteration continues. 


To illustrate, we denote the best policy at the $ i $th iteration (trained using all observations from the previous $ i-1 $ iterations) as $ \pi_{i} $ and for any policy $ \pi \in \Pi $ we have its expected loss under the observation distribution induced by $ \pi_{i} $ as $ l_{i}\left(\pi\right) =  \mathbb{E}_{\phi \sim d_{\pi_{i}}, a^* \sim \pi^*(\phi)} \left[l_{i}\left(\phi,\pi, a^*\right)\right], l_{i}\in \left[0,l_{max}\right]$\footnote{In online learning, the surrogate loss $ l $ can be seen as chosen by some adversary which varies at each iteration.}. In addition, we denote the minimal loss in hindsight after $ N \ge i $ iterations as $ \epsilon_{min} = \min_{\pi \in \Pi} \frac{1}{N}\sum_{i=1}^{N}l_{i}(\pi) $ (i.e., the training loss after using all observations from $ N $ iterations). Then, we can represent the average regret of this online learning program as $ \epsilon_{regret} = \frac{1}{N}\sum_{i=1}^{N}l_{i}(\pi_{i}) - \epsilon_{min}$. Using DAGGER~\cite{ross2011reduction} as an example method, the accumulated error difference becomes the summation of three terms: 

\vspace{-1em}

\begin{equation}
J\left(\hat{\pi}\right) \le T\epsilon_{min} + T\epsilon_{regret} +  \mathcal{O}(\frac{f\left(T, l_{max}\right)}{N}),
\label{eq:dagger}
\end{equation}

\noindent where $ f\left(\cdot\right) $ is the function of fixed $ T $ and $ l_{max} $. As $ N \rightarrow \infty $, the third term tends to $ 0 $ so as the second term if a no-regret algorithm such as the Follow-the-Leader~\cite{hazan2007logarithmic} is used.



The aforementioned approach provides a practical way to solve SPC tasks. However, it may require many iterations for obtaining a good policy. In addition, usually human experts or pre-defined controllers are needed for labeling the generated training data, which could be inefficient or difficult to generalize. For autonomous driving, we want the iteration number to be minimal since it directly corresponds to the number of accidents. This requires the generation of training data being accurate, efficient, and sufficient.




\section{ADAPS}
\label{sec:adaps}
In the following, we present theoretical analysis of our framework and introduce our framework pipeline. 

\subsection{Theoretical Analysis}
We have evaluated our approach against existing learning mechanisms such as DAGGER~\cite{ross2011reduction}, with our method's results proving to be more effective. Specifically, DAGGER~\cite{ross2011reduction} assumes that an underlying learning algorithm has access to a \textit{reset model}. So, the training examples can be obtained only \textit{online} by putting an agent to its initial state distribution and executing a learned policy, thus achieving  ``small changes'' at each iteration~\cite{ross2011reduction,daume2009search,kakade2002approximately,bagnell2004policy}. In comparison, our method allows a learning algorithm to access a \textit{generative model} so that the training examples can be acquired \textit{offline} by putting an agent to arbitrary states during the analysis of an accident and letting a \textit{generative} model simulate its behavior.  This approach results in massive training data, thus achieving ``large changes'' of a policy at one iteration. 

Additionally, existing techniques such as DAGGER~\cite{ross2011reduction} usually incorporate the demonstrations of a few experts into training. Because of the \textit{reset} model assumption and the lack of a diversity requirement on experts, these demonstrations can be homogeneous. In contrast, using our parameterized model to retrace and analyze each accident, the number of recovery actions obtained can be multiple orders of magnitude higher. Subsequently, we can treat the generated trajectories and the additional data generated based on them (described in Section~\ref{sec:data_detect}) as running a learned policy to sample independent expert trajectories at different states, since 1) a policy that is learned using DNNs can achieve a small training error and 2) our model provides near-exhaustive coverage of the configuration space of a vehicle. With these assumptions, we derive the following theorem.

\begin{theorem}
	If the surrogate loss $ l $ upper bounds the true cost $ C $, by collecting $ K $ trajectories using ADAPS at each iteration, with probability at least $ 1 - \mu $, $ \mu \in (0,1) $, we have the following guarantee:
	\vspace{-1.0em}
	\begin{equation*}
	J\left(\hat{\pi}\right) \le J\left(\bar{\pi}\right) \le T\hat{\epsilon}_{min} + T\hat{\epsilon}_{regret} + \mathcal{O}\left(Tl_{max}\sqrt{\frac{\log{\frac{1}{\mu}}}{KN}}\right).
	\end{equation*}
	\label{thm:adaps}
\end{theorem}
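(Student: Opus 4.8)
The plan is to follow the reduction-to-online-learning argument behind the DAGGER bound~\eqref{eq:dagger}, replacing its single rollout per iteration with the $K$-trajectory generative sampling of ADAPS, and then to absorb the gap between empirical and expected losses with a martingale concentration inequality. Throughout, let $\pi_1,\dots,\pi_N$ denote the policies produced over the $N$ iterations, let $\bar\pi$ be the mixed policy that at the start of each trajectory draws one of $\pi_1,\dots,\pi_N$ uniformly at random (so that $J(\bar\pi)=\frac{1}{N}\sum_{i=1}^{N}J(\pi_i)$), and let $\hat\pi$ be the best policy in that sequence. Then $J(\hat\pi)\le J(\bar\pi)$ is immediate, since the minimum is at most the average, and it remains to bound $J(\bar\pi)$; I would do this in three steps.

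\emph{Step 1 (reduction).} Since $l$ upper bounds the true cost $C$ pointwise, $J(\pi_i)=\sum_{t=1}^{T}\mathbb{E}_{s\sim d_{\pi_i}^{t}}\!\left[C(s,\pi_i(s))\right]\le T\ell_i(\pi_i)$---the no-covariate-shift estimate underlying the $\mathcal{O}(T\epsilon)$ bound (Appendix~\ref{sec:app-proofs})---so averaging over $i$ gives $J(\bar\pi)\le T\bar\ell_N$, where $\bar\ell_N=\frac{1}{N}\sum_{i=1}^{N}\ell_i(\pi_i)$ is the average \emph{expected} surrogate loss along the induced observation distributions. \emph{Step 2 (generative sampling).} At iteration $i$, ADAPS produces not one rollout of $\pi_i$ but, through the parameterized model, $K$ trajectories which---under the assumptions stated just before the theorem (small DNN training error and near-exhaustive coverage of the vehicle configuration space)---may be treated as $K$ independent draws from $d_{\pi_i}$ with expert labels. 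Writing $\hat\ell_i(\pi_i)$ for the empirical loss of $\pi_i$ over those $K$ trajectories, the definitions of $\hat\epsilon_{min}$ and $\hat\epsilon_{regret}$ give $\frac{1}{N}\sum_{i=1}^{N}\hat\ell_i(\pi_i)=\hat\epsilon_{min}+\hat\epsilon_{regret}$.

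\emph{Step 3 (concentration), the crux.} It remains to show $\bar\ell_N\le\hat\epsilon_{min}+\hat\epsilon_{regret}+\mathcal{O}\!\left(l_{max}\sqrt{\log(1/\mu)/(KN)}\right)$. The obstacle is that the $\pi_i$ are not independent---each is trained on all data gathered in iterations $1,\dots,i-1$---so instead of a plain Hoeffding inequality I would build the martingale whose increments are, for each of the $KN$ generated trajectories, the deviation of that trajectory's loss from its conditional expectation $\ell_i(\pi_i)$ given the past, each increment confined to an interval of width $\mathcal{O}(l_{max})$ because $\ell_i\in[0,l_{max}]$. Azuma--Hoeffding over these $KN$ increments then yields the $1/\sqrt{KN}$ rate with the stated $\log(1/\mu)$ dependence and confidence $1-\mu$; this is exactly where collecting $K$ trajectories per iteration, rather than one as in DAGGER, sharpens the last term of~\eqref{eq:dagger} by a factor $\sqrt{K}$. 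Chaining Steps~1--3 gives $J(\hat\pi)\le J(\bar\pi)\le T\bar\ell_N\le T\hat\epsilon_{min}+T\hat\epsilon_{regret}+\mathcal{O}\!\left(Tl_{max}\sqrt{\log(1/\mu)/(KN)}\right)$, the claimed bound. Finally, as for DAGGER, I would note that with a no-regret learner (e.g.\ Follow-the-Leader) $\hat\epsilon_{regret}\to0$ as $N\to\infty$, so the guarantee is dominated by $T\hat\epsilon_{min}$ plus a term vanishing at rate $1/\sqrt{KN}$.
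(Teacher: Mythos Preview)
Your proposal is correct and follows essentially the same route as the paper's own proof: reduce $J(\bar\pi)$ to $T$ times the average expected surrogate loss $\frac{1}{N}\sum_i \ell_i(\pi_i)$ (using that ADAPS rolls out the learned policy directly, so there is no $\beta$-mixing term as in DAGGER), rewrite this as the empirical average $\hat\epsilon_{min}+\hat\epsilon_{regret}$ plus the deviation $\frac{1}{KN}\sum_{i,k}\bigl(\ell_i(\pi_i)-\hat\ell_{ik}(\pi_i)\bigr)$, and control that deviation by Azuma--Hoeffding on the martingale with $KN$ increments bounded by $l_{max}$. The paper makes the martingale explicit via $Y_{i,k}=\ell_i(\hat\pi_i)-\hat\ell_{ik}(\hat\pi_i)$ and $X_{nK+m}=\sum_{i\le n}\sum_k Y_{i,k}+\sum_{k\le m}Y_{n+1,k}$, but this is exactly the object you describe in Step~3.
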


\vspace{-.5em}

\begin{proof}
	See Appendix~\ref{sec:app-adaps-proof}.
\end{proof}
  
Theorem~\ref{thm:adaps} provides a bound for the expected cost-to-go of the best learned policy $ \hat{\pi} $ based on the empirical error of the best policy in $ \Pi $ (i.e., $ \hat{\epsilon}_{min} $) and the empirical average regret of the learner (i.e., $ \hat{\epsilon}_{regret} $). The second term can be eliminated if a no-regret algorithm such as Follow-the-Leader~\cite{hazan2007logarithmic} is used and the third term suggests that we need the number of training examples $ KN $ to be $\mathcal{O}\left(T^2l^2_{max}\log{\frac{1}{\mu}}\right) $ in order to have a negligible generalization error, which is easily achievable using ADAPS. Summarizing these changes, we derive the following Corollary.

\begin{corollary}
	If $ l $ is convex in $ \pi $ for any $ s $ and it upper bounds $ C $, and Follow-the-Leader is used to select the learned policy, then for any $ \epsilon > 0 $, after collecting $ \mathcal{O}\left(\frac{T^2l^2_{max}\log{\frac{1}{\mu}}}{\epsilon^2}\right) $ training examples, with probability at least $ 1 - \mu $, $ \mu \in (0,1) $, we have the following guarantee:
	\begin{equation*}
	J\left(\hat{\pi}\right) \le J\left(\bar{\pi}\right) \le T\hat{\epsilon}_{min} + \mathcal{O}\left(\epsilon\right).
	\end{equation*}
	\label{col:adaps}
\end{corollary}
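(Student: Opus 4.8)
The plan is to obtain the Corollary directly from Theorem~\ref{thm:adaps} by choosing the number of iterations $N$ and the number of trajectories per iteration $K$ so that the last two terms of the Theorem's bound are each $\mathcal{O}(\epsilon)$. First I would invoke Theorem~\ref{thm:adaps}, which already gives, with probability at least $1-\mu$,
\[
J(\hat{\pi})\le J(\bar{\pi})\le T\hat{\epsilon}_{min}+T\hat{\epsilon}_{regret}+\mathcal{O}\!\left(Tl_{max}\sqrt{\tfrac{\log(1/\mu)}{KN}}\right).
\]
Since the hypothesis assumes that $l$ is convex in $\pi$ for every $s$ and that Follow-the-Leader selects the sequence $\pi_1,\dots,\pi_N$, I would cite the standard no-regret guarantee for Follow-the-Leader on convex losses~\cite{hazan2007logarithmic,kakade2009generalization}: the cumulative regret is sublinear in $N$, hence $\hat{\epsilon}_{regret}=\frac{1}{N}\sum_{i=1}^{N}l_{i}(\pi_{i})-\epsilon_{min}\to 0$ as $N\to\infty$. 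In particular there is an $N_0$ (depending on $T$, $l_{max}$, and $\epsilon$, but not on $K$) such that $T\hat{\epsilon}_{regret}\le\epsilon$ whenever $N\ge N_0$, so that term is absorbed into $\mathcal{O}(\epsilon)$.

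Next I would bound the generalization term. Writing $m=KN$ for the total number of training examples collected across all iterations, the requirement $Tl_{max}\sqrt{\log(1/\mu)/m}=\mathcal{O}(\epsilon)$ is equivalent to $m=\mathcal{O}\!\left(T^2l_{max}^2\log(1/\mu)/\epsilon^2\right)$, which is exactly the sample budget asserted in the Corollary. Because ADAPS accesses a generative model, $K$ can be made as large as desired at a fixed iteration without incurring additional accidents, so one can simultaneously enforce $N\ge N_0$ (making the regret term at most $\epsilon$) and $KN=\mathcal{O}\!\left(T^2l_{max}^2\log(1/\mu)/\epsilon^2\right)$ (making the generalization term $\mathcal{O}(\epsilon)$). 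Substituting both into the displayed inequality yields $J(\hat{\pi})\le J(\bar{\pi})\le T\hat{\epsilon}_{min}+\mathcal{O}(\epsilon)$, which is the claim.

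The main obstacle I anticipate is handling the coupling between $N$ and $K$ cleanly: the regret term shrinks only with $N$, while the generalization term shrinks with the product $KN$, so the argument must make explicit that the generative-model assumption lets us inflate $K$ freely and that the constant hidden in the $\mathcal{O}(\cdot)$ controlling $\hat{\epsilon}_{regret}$ is independent of $K$. A secondary technical point is making precise the sense in which Follow-the-Leader is ``no-regret'' here --- whether one relies on strong convexity for an $\mathcal{O}(\log N/N)$ rate or merely on convexity for an $\mathcal{O}(1/\sqrt{N})$ rate --- but either is sufficient, since we only need $\hat{\epsilon}_{regret}=o(1)$; the remaining bookkeeping is identical to that already carried out in the proof of Theorem~\ref{thm:adaps}.
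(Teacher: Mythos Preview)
Your proposal is correct and follows essentially the same approach as the paper: invoke Theorem~\ref{thm:adaps}, use the no-regret property of Follow-the-Leader under convexity to absorb the $T\hat{\epsilon}_{regret}$ term, and set $KN=\mathcal{O}(T^2l_{max}^2\log(1/\mu)/\epsilon^2)$ to make the generalization term $\mathcal{O}(\epsilon)$. The paper's own proof is a one-line pointer to Theorem~\ref{thm:adaps} and the preceding discussion; your version spells out more carefully the decoupling of $N$ (controlling regret) from $K$ (controlling the concentration term), which the paper leaves implicit.
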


\vspace{-1.2em}

\begin{proof}
	Following Theorem~\ref{thm:adaps} and the aforementioned deduction.
\end{proof}

Now we only need the best policy to have a small training error $ \hat{\epsilon}_{min} $. This can be achieved using DNNs since they have rich representing capabilities.

\subsection{Framework Pipeline}


The pipeline of our framework is the following. First, in \textit{SimLearner}, we test a learned policy by letting it control an AV. During the testing, an accident may occur, in which case the trajectory of the vehicle and the full specifications of the situation (e.g., positions of obstacles, road configuration, etc.) are known. Next, we switch to  \textit{SimExpert} and replicate the specifications of the accident so that we can ``solve'' the accident (i.e., find alternative safe trajectories and dangerous zones). After obtaining the solutions, we then use them to generate additional training data in \textit{SimLearner}, which will be combined with previously generated data to update the policy. Finally, we test the updated policy again.


%
%

\section{Policy Learning}
\label{sec:policy}
In this section, we will detail our control policy by first explaining our design rationale then formulating our problem and introducing the training data collection.

Driving is a hierarchical decision process. In its simplest form, a driver needs to constantly monitor the road condition, decide it is ``safe'' or ``dangerous'', and make corresponding maneuvers. When designing a control policy for AVs, we need to consider this hierarchical aspect. In addition, driving is a temporal behavior. Drivers need reaction time to respond to various road situations~\cite{johansson1971drivers,mcgehee2000driver}. A Markovian-based control policy will not model this aspect and instead likely to give a vehicle jerky motions. Consider these factors, we propose a \textit{hierarchical} and \textit{memory-enabled} control policy. 

The task we consider is autonomous driving via a single front-facing camera. Our control policy consists of three modules: \textit{Detection}, \textit{Following}, and \textit{Avoidance}. The \textit{Detection} module keeps monitoring road conditions and activates either \textit{Following} or \textit{Avoidance} to produce a steering command. All these modules are trained via end-to-end imitation learning and share a similar network specification which is detailed in Appendix~\ref{sec:network-specs}.


\vspace{-0.2em}

\subsection{End-to-end Imitation Learning}
\label{subsec:end2end}

The objective of imitation learning is to train a model that behaves or makes decisions like an expert through demonstrations. The model could be a classifier or a regresser $ \pi $ parameterized by $ \mathbf{\theta}_{\pi} $:
\vspace*{-1em}

\begin{equation}
\hat{\theta} = \argmin_{\theta_{\pi}}\sum_{t=1}^{T} \mathcal{F}\left(\pi\left(\phi_{t}; \mathbf{\theta_{\pi}}\right),a^*_{t} \right),
\end{equation}

\noindent where $ \mathcal{F} $ is a distance function.

The end-to-end aspect denotes the mapping from raw observations to decision/control commands. For our policy, we need one decision module $ \pi_{Detection} $ and two control modules $ \pi_{Following} $ and $ \pi_{Avoidance} $. The input for $ \pi_{Detection} $ is a sequence of annotated images while the outputs are binary labels indicating whether a road condition is dangerous or safe. The inputs for $ \pi_{Following} $ and $ \pi_{Avoidance} $ are sequences of annotated images while the outputs are steering angles. Together, these learned policies form a hierarchical control mechanism enabling an AV to drive safely on roads and avoid obstacles when needed.


\subsection{Training Data Collection}
\label{subsec:data}


For training \textit{Following}, inspired by the technique used by Bojarski et al.~\cite{Bojarski2016}, we collect images from three front-facing cameras behind the main windshield: one at the center, one at the left side, and one at the right side. The image from the center camera is labeled with the exact steering angle while the images from the other two cameras are labeled with adjusted steering angles. However, once \textit{Following} is learned, it only needs images from the center camera to operate.


For training \textit{Avoidance}, we rely on \textit{SimExpert}, which can generate numerous intermediate collision-free trajectories between the first moment and the last moment of a potential accident (see Section~\ref{subsec:solving}). By positioning an AV on these trajectories, we collect images from the center front-facing camera along with corresponding steering angles. The training of \textit{Detection} requires a more sophisticated mechanism and is the subject of the next section.

\section{Learning from Accidents}
\label{sec:accident}

We explain how we analyze an accident in \textit{SimExpert} and use the generated data to train the \textit{Avoidance} and \textit{Detection} modules of our policy. \textit{SimExpert} is built based on the multi-agent simulator WarpDriver~\cite{wolinski2016warpdriver}.

%

\subsection{Solving Accidents}
\label{subsec:solving}

When an accident occurs, we know the trajectory of the tested vehicle for the latest $K$ frames, which we note as a collection of states $\mathcal{S} = \bigcup_{k \in [\![1, K]\!]} \mathbf{s}_k$, where each state $\mathbf{s}_k \in \mathbb{R}^4$ contains the 2-dimensional position and velocity vectors of the vehicle.
Then, there are three notable states on this trajectory that we need to track.
The first is the earliest state where the vehicle involved in an accident (is in a collision) $\mathbf{s}_{k_a}$ (at frame $k_a$).
The second is the last state $\mathbf{s}_{k_l}$ (at frame $k_l$) where the expert algorithm can still avoid a collision.
The final one is the first state $\mathbf{s}_{k_f}$ (at frame $k_f$) where the expert algorithm perceives the interaction leading to the accident with the other involved agent, before that accident.

In order to compute these notable states, we briefly recall the high-level components of WarpDriver~\cite{wolinski2016warpdriver}. This collision-avoidance algorithm consists of two parts.
The first is the function $p$, which given the current state of an agent $\mathbf{s}_k$ and any prediction point $\mathbf{x} \in \mathbb{R}^3$ in 2-dimensional space and time (in this agent's referential), gives the probability of that agent's colliding with any neighbor $p(\mathbf{s}_k, \mathbf{x}) \in [0, \: 1]$.
The second part is the solver, which based on this function, computes the agent's probability of colliding with neighbors along its future trajectory starting from a state $\mathbf{s}_k$ (i.e., computed for $\mathbf{x}$ spanning the future predicted trajectory of the agent, we denote this probability $P\left(\mathbf{s}_k\right)$), and then proposes a new velocity to lower this probability.
Subsequently, we can initialize an agent in this algorithm to any state $\mathbf{s}_k \in \mathcal{S}$ and compute a new trajectory consisting of $\hat{K}$ new states $\mathcal{\hat{S}}_k = \bigcup_{\hat{k} \in [\![1, \hat{K}]\!]} \mathbf{\hat{s}}_{\hat{k}}$, where $\mathbf{\hat{s}}_1 = \mathbf{s}_k$.

Additionally, since $\mathbf{x} = \vecth{0}{0}{0}$ in space and time in an agent's referential represents the agent's position at the current time (we can use this point $\mathbf{x}$ with function $p$ to determine if the agent is currently colliding with anyone), we find $\mathbf{s}_{k_a}$ where $ k_a = \min(k)$ subject to $k \in [\![1, K]\!]$ and $p(\mathbf{s}_{k}, \vecth{0}{0}{0}) > 0$. We note that a trajectory $\mathcal{\hat{S}}_k$ produced by the expert algorithm could contain collisions (accounting for vehicle dynamics) depending on the state $\mathbf{s}_k$ that it was initialized from.
We can denote the set of colliding states along this trajectory as $coll(\mathcal{\hat{S}}_k) = \{ \mathbf{\hat{s}}_{\hat{k}} \in \mathcal{\hat{S}}_k \: | \: p(\mathbf{\hat{s}}_{\hat{k}}, \vecth{0}{0}{0}) > 0 \}$.
Then, we can compute $\mathbf{s}_{k_l}$ where $k_l = max(k)$ subject to $k \in [\![1, k_a]\!]$ and $coll(\mathcal{\hat{S}}_k) = \emptyset$. Finally, we can compute $\mathbf{s}_{k_f}$ with $k_f = 1 + max(k)$ subject to $k \in [\![1, k_l]\!]$ and $P(\mathbf{s}_k) = 0$.

Knowing these notable states, we can solve the accident situation by computing the set of collision-free trajectories $solve(\mathcal{S}) = \{ \mathcal{\hat{S}}_k \: | \: k \in [\![k_f, k_l]\!] \}$. An example can be found in Appendix~\ref{sec:app-traj}. These trajectories can then be used to generate training examples in \textit{SimLearner} in order to train the \textit{Avoidance} module.

\subsection{Additional Data Coverage}
\label{sec:data_detect}

\begin{figure}
\begin{center}
\includegraphics[width=1\columnwidth]{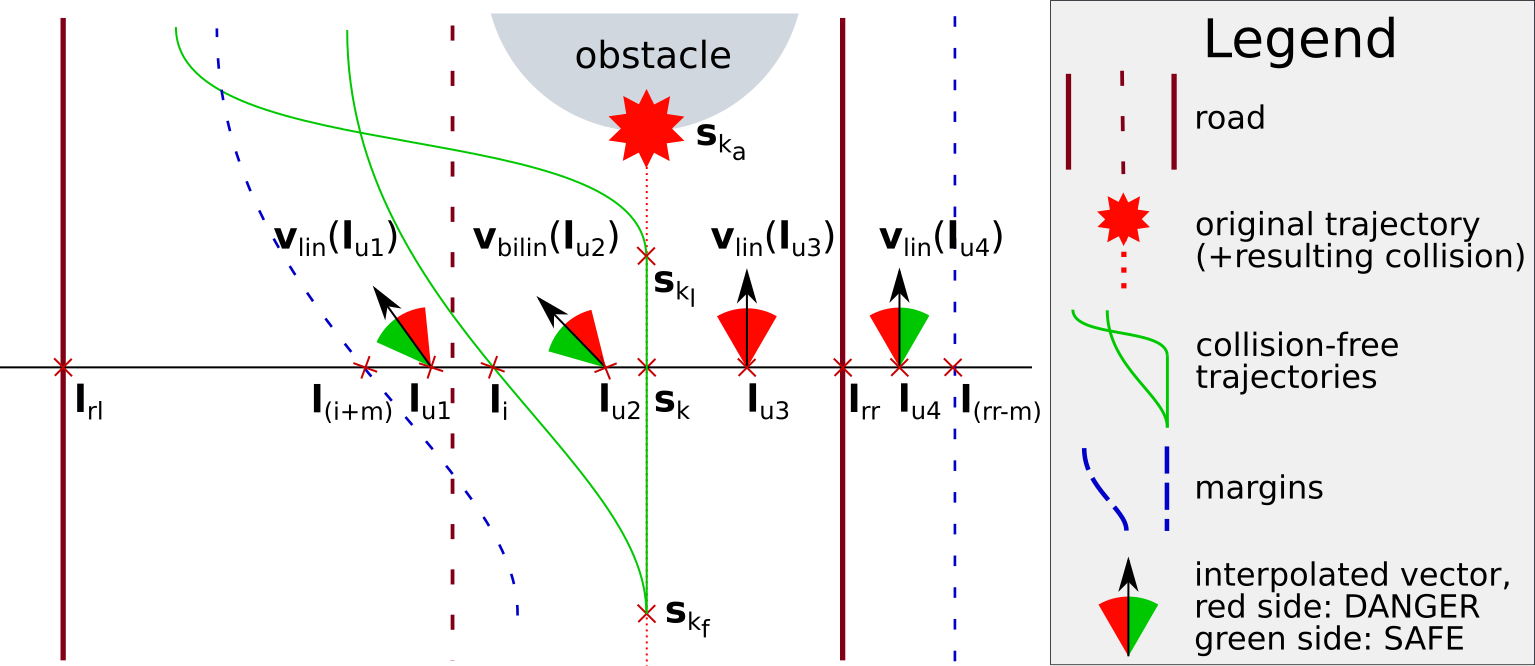}
\end{center}
\vspace{-1.3em}
\caption{
Illustration of important points and DANGER/SAFE labels from Section~\ref{sec:accident} for a vehicle traveling on the right lane of a straight road, with an obstacle in front.
Labels are shown for four points $\{ \mathbf{l}_{u1}, \mathbf{l}_{u2}, \mathbf{l}_{u3}, \mathbf{l}_{u4} \}$ illustrating the four possible cases.
}
\label{fig:points}
\vspace{-2em}
\end{figure}

The previous step generated collision-free trajectories $solve(\mathcal{S})$ between $\mathbf{s}_{k_f}$ and $\mathbf{s}_{k_l}$.
It is possible to build on these trajectories if the tested steering algorithm has particular data/training requirements.
Here we detail the data we derive in order to train the \textit{Detection} module, where the task is to determine if a situation is dangerous and tell \textit{Avoidance} to address it.

To proceed, we essentially generate a number of trajectories parallel to $\{ \mathbf{s}_{k_f}, ..., \mathbf{s}_{k_a} \}$, and for each position on them, generate several images for various orientations of the vehicle.
These images are then labeled based on under-steering/over-steering as compared to the ``ideal'' trajectories in $solve(\mathcal{S})$.
This way, we scan the region of the road before the accident locus, generating several images (different vehicle orientations) for each point in that region.

In summary (a thorough version can be found in Appendix~\ref{sec:app-accidents}), and as depicted in Figure~\ref{fig:points}, at each state $\mathbf{s}_k$, we construct a line perpendicular to the original trajectory.
Then on this line, we define three points and a margin $g=0.5~m$.
The first point $\mathbf{l}_i$ is the furthest (from $\mathbf{s}_k$) intersection between this line and the collision-free trajectories $solve(\mathcal{S})$.
The other two points $\{ \mathbf{l}_{rl}, \mathbf{l}_{rr} \}$ are the intersections between the constructed line and the left and right road borders, respectively.
From these points, a generated image at a position $\mathbf{l}_u$ along the constructed line and with a given direction vector has either a DANGER or SAFE label (red and green ranges in Figure~\ref{fig:points}) depending on the direction vector being on the ``left'' or ``right'' of the vector resulting from the interpolation of the velocity vectors of states belonging to nearby collision-free trajectories (bilinear interpolation if $\mathbf{l}_u$ is between two collision-free trajectories, linear otherwise).

If a point is on the same side of the original trajectory as the collision-free trajectories ($\mathbf{l}_{u1}$ and $\mathbf{l}_{u2}$ in  Figure~\ref{fig:points}, $\mathbf{l}_{u1}$ is ``outside'' but within the margin $g$ of the collision-free trajectories, $\mathbf{l}_{u2}$ is ``inside'' the collision-free trajectories), the label is SAFE on the exterior of the avoidance maneuver, and DANGER otherwise.

If a point is on the other side of the original trajectory as compared to the collision-free trajectories ($\mathbf{l}_{u3}$ and $\mathbf{l}_{u4}$ in Figure~\ref{fig:points})), inside the road ($\mathbf{l}_{u3}$) the label is always DANGER, while outside but within the margin $g$ of the road ($\mathbf{l}_{u4}$), the label is DANGER when directed towards the road, and SAFE otherwise.


\section{Experiments}
\label{sec:exp}

%
%


We test our framework in three scenarios: a straight road representing a linear geometry, a curved road representing a non-linear geometry, and an open ground. The first two scenarios demonstrate on-road situations with a static obstacle while the last one demonstrates an off-road situation with a dynamic obstacle. The specifications of our experiments are detailed in Appendix~\ref{sec:app-exp}. 

For evaluation, we compare our policy to the ``flat policy'' that essentially consists of a single DNN ~\cite{chen2015deepdriving,Xu2017end,zhang2017query,codevilla2017end}. Usually, this type of policy contains a few convolutional layers followed by a few dense layers. Although the specifications may vary, without human intervention, they are mainly limited to single-lane following~\cite{codevilla2017end}. In this work, we select Bojarski et al.~\cite{Bojarski2016} as an example network, as it is one of the most tested control policies. In the following, we will first demonstrate the effectiveness of our policy and then qualitatively illustrate the efficiency of our framework.

\subsection{Control Policy}

\begin{figure*}[th]
	\centering
	\includegraphics[width=7in]{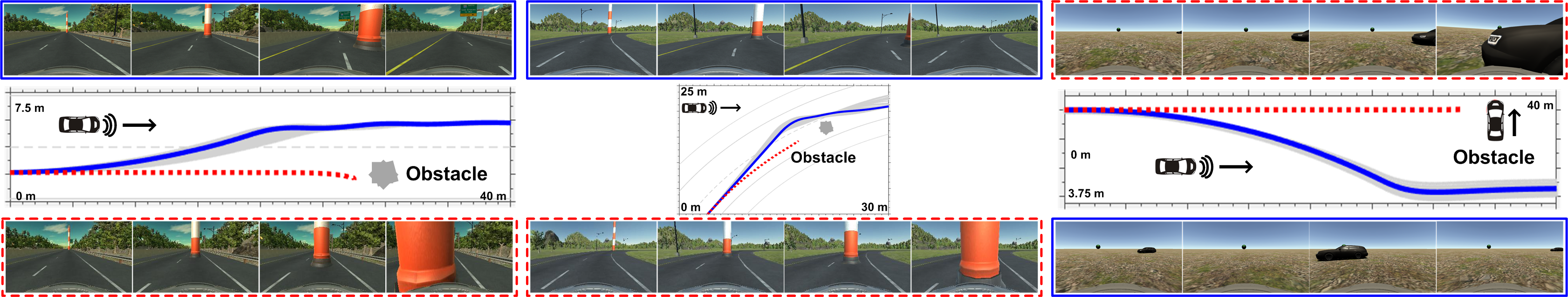}
	\vspace{-1em}
	\caption{LEFT and CENTER: the comparisons between our policy $ O_{full} $ (TOP) and Bojarski et al.~\cite{Bojarski2016}, $ B_{full} $ (BOTTOM). $ O_{full} $ can steer the AV away from the obstacle while $ B_{full} $ causes collision. RIGHT: the accident analysis results on the open ground. We show the accident caused by an adversary vehicle (TOP); then we show after additional training the AV can now avoid the adversary vehicle (BOTTOM).}
	\label{fig:examples}
	\vspace{-0.5em}
\end{figure*}

\subsubsection{On-road}

We derive our training datasets from \textit{straight road with or without an obstacle} and \textit{curved road with or without an obstacle}.
This separation allows us to train multiple policies and  test the effect of \textit{learning from accidents} using our policy compared to Bojarski et al.~\cite{Bojarski2016}. By progressively increasing the training datasets, we obtain six policies for evaluation:

\begin{itemize}
	\item Our own policy: trained with only lane-following data $O_{follow}$; $O_{follow}$ additionally trained after analyzing one accident on the straight road $O_{straight}$; and $O_{straight}$ additionally trained after producing one accident on the curved road $O_{full}$.
	\item Similarly, for the policy from Bojarski et al.~\cite{Bojarski2016}: $B_{follow}$, $B_{straight}$, and $B_{full}$.
\end{itemize}

We first evaluate $B_{follow}$ and $O_{follow}$ using both the straight and curved roads by counting how many laps (out of 50) the AV can finish. As a result, both policies managed to finish all laps while keeping the vehicle in the lane. We then test these two policies on the straight road with a static obstacle added. Both policies result in the vehicle collides into the obstacle, which is expected since no accident data were used during the training.

Having the occurred accident, we can now use \textit{SimExpert} to generate additional training data to obtain $B_{straight}$\footnote{The accident data are only used to perform a regression task as the policy by Bojarski et al.~\cite{Bojarski2016} does not have a classification module.} and $O_{straight}$.
As a result, $B_{straight}$ continues to cause collision while $O_{straight}$ avoids the obstacle. Nevertheless, when testing $O_{straight}$ on the curved road with an obstacle, accident still occurs because of the corresponding accident data are not yet included in training.

By further including the accident data from the curved road into training, we obtain $ B_{full} $ and $ O_{full}$. $ O_{full}$ manages to perform both lane-following and collision avoidance in all runs. $B_{full}$, on the other hand, leads the vehicle to drift away from the road. 

For the studies involved an obstacle, we uniformly sampled 50 obstacle positions on a $3~m$ line segment that is perpendicular to the direction of a road and in the same lane as the vehicle.
We compute the success rate as how many times a policy can avoid the obstacle (while stay in the lane) and resume lane-following afterwards. The results are shown in Table~\ref{tb:acc} and example trajectories are shown in Figure~\ref{fig:examples} LEFT and CENTER.



\begin{table*}[ht!]
	\centering
	\small
	\tabcolsep=0.1cm
	\scalebox{0.8}{
		\begin{tabular}{ccccccccc}
			\toprule
			& \multicolumn{5}{c}{Training Module (Data) } & \multicolumn{3}{c}{Other Specs}  \\        
			\cmidrule(l){2-6} \cmidrule(l){7-9}      
			Scenarios  & \textit{Following (\#Images)} & \textit{Avoidance (\#Images)}  & \textit{Detection (\#Images)} & Total & Data Augmentation & \#Safe Trajectories & Road Type & Obstacle   
			\\
			\midrule
			Straight road & 33~642 & 34~516  & 32~538  & 97~854 & $ 212 $x & 74  &  on-road   & static 
			\\
			\midrule
			Curved road   & 31~419 & 33~624  & 71~859  & 136~855 & $ 98 $x & 40  &  on-road   & static 
			\\
			\midrule
			Open ground   & 30~000 & 33~741  & 67~102 & 130~843 & $ 178 $x & 46  &  off-road  &  dynamic
			\\
			\bottomrule
		\end{tabular}}
	\caption{\textbf{Training Data Summary:} Our method can achieve over \textbf{200} times more training examples than DAGGER~\cite{ross2011reduction} at one iteration leading to large improvements of a policy.} 
	
	\vspace{-2em}
	\label{tb:train-data}
\end{table*}

\subsubsection{Off-road}

We further test our method on an open ground which involves a dynamic obstacle. The AV is trained heading towards a green sphere while an adversary vehicle is scripted to collide with the AV on its default course. The result showing our policy can steer the AV away from the adversary vehicle and resume its direction to the sphere target. This can be seen in Figure~\ref{fig:examples} RIGHT.

\begin{table}[ht!]
\centering
\small
\tabcolsep=0.1cm
\scalebox{0.8}{
	\begin{tabular}{ccccccc}
		\toprule
		& \multicolumn{6}{c}{Test Policy and Success Rate (out of 50 runs)}  \\        
		\cmidrule(l){2-7}       
		Scenario & $ B_{follow} $ & $ O_{follow} $  & $ B_{straight} $ & $ O_{straight} $ & $ B_{full} $ & $ O_{full} $   \\
		\midrule
		Straight rd. / Curved rd. & 100\% & 100\%  & 100\%  & 100\%  & 100\%  & 100\% 
		\\
		\midrule
		Straight rd. + Obst. & 0\% & 0\%  & \textbf{0\%}  & \textbf{100\%}  & \textbf{0\%}  & \textbf{100\%}
		\\
		\midrule
		Curved rd. + Obst. & 0\% & 0\%  & 0\%  & 0\%  &\textbf{ 0\% } & \textbf{100\%}
		\\
		\bottomrule
	\end{tabular}}
	\caption{\textbf{Test Results of On-Road Scenarios:} Our policies $ O_{straight} $ \& $ O_{full} $ can lead to robust collision avoidance and lane-following behaviors. }
	\label{tb:acc}
	\vspace{-2em}
\end{table}

\subsection{Algorithm Efficiency}

The key to rapid policy improvement is to generate training data accurately, efficiently, and sufficiently. Using \textit{principled simulations} covers the first two criteria, now we demonstrate the third. Compared to the average number of training data collected by DAGGER~\cite{ross2011reduction} at one iteration, our method can achieve over 200 times more training examples for one iteration\footnote{The result is computed via dividing the total number of training images via our method by the average number of training data collected using the safe trajectories in each scenario.}. This is shown in Table~\ref{tb:train-data}.

In Figure~\ref{fig:straight-tsne}, we show the visualization results of images collected using our method and DAGGER~\cite{ross2011reduction} within one iteration via progressively increasing the number of sampled trajectories. Our method generates much more heterogeneous training data, which when produced in a large quantity can greatly facilitate the update of a control policy.

\begin{figure}[th]
	\centering
	\includegraphics[width=0.7\columnwidth]{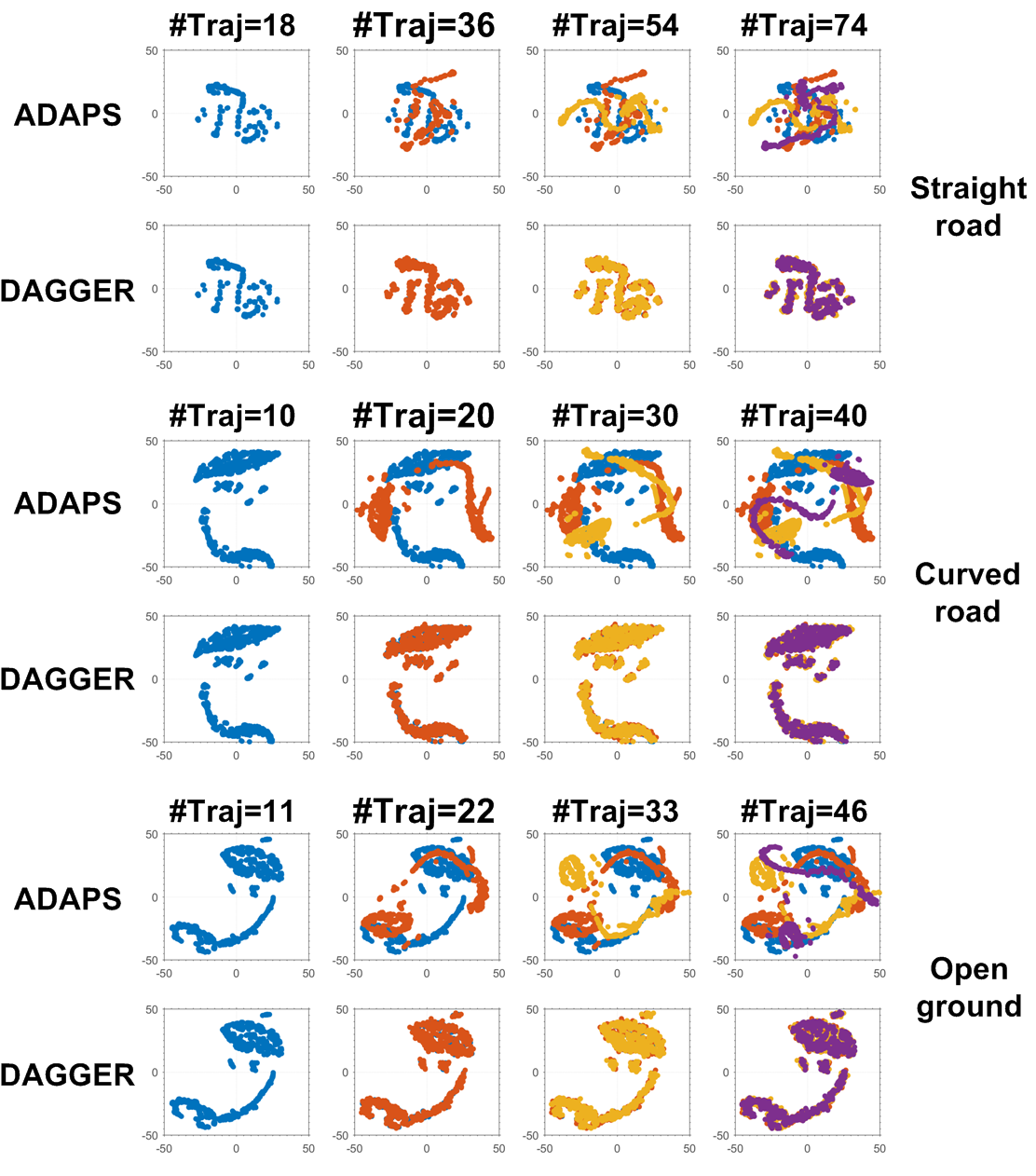}
	\vspace{-1em}
	\caption{The visualization results of collected images using t-SNE~\cite{maaten2008visualizing}. Our method can generate more heterogeneous training data compared to DAGGER~\cite{ross2011reduction} at one iteration as the sampled trajectories progress. }
	\label{fig:straight-tsne}
	\vspace{-1em}
\end{figure}

\section{Conclusion}
\label{sec:conclude}

In this work, we have proposed ADAPS, a framework that consists of two simulation platforms and a control policy. Using ADAPS, one can easily simulate accidents. Then, ADAPS will automatically retrace each accident, analyze it, and plan alternative safe trajectories. With the additional training data generation technique, our method can produce a large number of heterogeneous training examples compared to existing methods such as DAGGER~\cite{ross2011reduction}, thus representing a more efficient learning mechanism. Our hierarchical and memory-enabled policy offers robust collision avoidance behaviors that previous policies fail to achieve. We have evaluated our method using multiple simulated scenarios, in which our method shows a variety of benefits.

There are many future directions. First of all, we would like to combine long-range vision into ADAPS so that an AV can plan ahead in time. Secondly, the generation of accidents can be parameterized using knowledge from traffic engineering studies. Lastly, we would like to combine more sensors and fuse their inputs so that an AV can navigate in more complicated traffic scenarios~\cite{CityFlowRecon:2017}.

\section*{Acknowledgment}
The authors would like to thank US Army Research Office and UNC Arts \& Science Foundation, and Dr. Feng ``Bill'' Shi for insightful discussions.

\clearpage 
\bibliographystyle{IEEEtran}
\bibliography{root}

\clearpage 
\section{Appendix}

\subsection{Solving An SPC Task}
\label{sec:app-proofs}
%
%
%
%

We show the proofs of solving an SPC task using standard supervised learning, DAGGER~\cite{ross2011reduction}, and ADAPS, respectively. We use ``state'' and "observation'' interchangeably here as for these proofs we can always find a deterministic function to map the two.

\subsubsection{Supervised Learning}
The following proof is adapted and simplified from Ross et al.~\cite{ross2011reduction}. We include it here for completeness.

\begin{theorem}
	Consider a $ T $-step control task. Let $ \epsilon = \mathbb{E}_{\phi \sim d_{\pi^*}, a^* \sim \pi^*(\phi)} \left[l\left(\phi,\pi, a^*\right)\right] $ be the observed surrogate loss under the training distribution induced by the expert's policy $ \pi^* $. We assume $ C \in \left[0, C_{max}\right] $ and $ l $ upper bounds the 0-1 loss. $ J\learn $ and $ J\expert $ denote the cost-to-go over $ T $ steps of executing $ \pi $ and $ \pi^* $, respectively. Then, we have the following result:
	\begin{equation*}
		J\learn \le J\expert + C_{max}T^2\epsilon.
	\end{equation*}
	\label{thm:sl}
\end{theorem}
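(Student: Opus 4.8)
The plan is to prove the bound by a coupling argument between the learner $\pi$ and the expert $\pi^*$, tracking the first step at which the two policies disagree. First I would reduce the hypothesis to a per-step quantity. Writing $d_{\pi^*} = \frac{1}{T}\sum_{t=1}^{T} d_{\pi^*}^{t}$ and letting $\epsilon_t = \mathbb{E}_{\phi \sim d_{\pi^*}^{t}}\!\left[\mathbb{1}(\pi(\phi) \neq \pi^*(\phi))\right]$ denote the probability that $\pi$ disagrees with $\pi^*$ at step $t$ under the expert's own observation distribution, the assumption that $l$ upper bounds the 0-1 loss immediately gives $\frac{1}{T}\sum_{t=1}^{T}\epsilon_t \le \mathbb{E}_{\phi \sim d_{\pi^*}}\!\left[l(\phi,\pi,\pi^*(\phi))\right] = \epsilon$, i.e. $\sum_{t=1}^{T}\epsilon_t \le T\epsilon$.

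Next I would set up the coupling: run $\pi$ and $\pi^*$ from the same initial state with the same stochastic transition outcomes, so that their trajectories coincide up to and including the first step where $\pi(\phi) \neq \pi^*(\phi)$. Let $D$ be the event that such a disagreement occurs within the $T$ steps, and $p = \Pr(D)$. The key observation is that if no disagreement has occurred through step $t-1$, then the learner's observation at step $t$ is exactly the expert's observation $\phi(s_t^*)$; hence the event ``the first disagreement occurs at step $t$'' is contained in the event ``$\pi(\phi(s_t^*)) \neq \pi^*(\phi(s_t^*))$'', whose probability under $s_t^* \sim d_{\pi^*}^{t}$ is $\epsilon_t$. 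Summing over $t$ and invoking the previous step yields $p \le \sum_{t=1}^{T}\epsilon_t \le T\epsilon$.

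Then I would decompose the cost-to-go along $D$ and $D^c$. On $D^c$ the learner reproduces the expert's trajectory exactly, so $\mathbb{E}\!\left[\sum_{t=1}^{T} C(s_t,a_t)\,;\,D^c\right] \le J\expert$, using $C \ge 0$. On $D$ each of the $T$ per-step costs is at most $C_{max}$, contributing at most $C_{max}T \cdot p \le C_{max}T^{2}\epsilon$. Adding the two pieces gives $J\learn \le J\expert + C_{max}T^{2}\epsilon$.

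The main obstacle — indeed essentially the only delicate point — is the conditioning in the union bound: one must argue carefully that the probability of a \emph{first} disagreement exactly at step $t$ is bounded by the \emph{unconditional} expert-distribution disagreement probability $\epsilon_t$, rather than by some conditional quantity that is harder to control. The resolution is that conditioning on ``no disagreement so far'' pins the learner onto the expert's trajectory, placing that event inside one that is measurable with respect to $s_t^* \sim d_{\pi^*}^{t}$ alone, so dropping the conditioning only enlarges the probability; the remaining steps are a routine expectation split and the nonnegativity and boundedness of $C$.
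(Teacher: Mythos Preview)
Your proposal is correct and follows essentially the same argument as the paper's proof: both bound the probability of having diverged from the expert by step $t$ via a union bound over the first-disagreement time (yielding $\Pr(D)\le\sum_t\epsilon_t\le T\epsilon$), and both then charge at most $C_{max}$ per step on that event while matching the expert's cost otherwise. The only cosmetic difference is that the paper organizes the accounting per step (tracking $p_{t-1}$ and bounding $C_t \le C_{t,c}^{*} + (1-p_t)C_{max}$ before summing), whereas you split once on the global event $D$; both routes land on the identical $C_{max}T^{2}\epsilon$ term.
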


\begin{proof}
	In order to prove this theorem, we introduce the following notation and definitions:
	\begin{itemize}
		\item $ d_{t,c}^{\pi} $: the state distribution at $ t $ as a result of the following event: $ \pi $ is executed and has been choosing the same actions as $ \pi^* $ from time $ 1 $ to $ t-1 $.
		\item $ p_{t-1} \in \left[0,1\right] $: the probability that the above-mentioned event holds true.
		\item $ d_{t,e}^{\pi} $: the state distribution at $ t $ as a result of the following event: $ \pi $ is executed and has chosen at least one different action than $ \pi^* $ from time $ 1 $ to $ t-1 $.
		\item $ (1 - p_{t-1}) \in \left[0,1\right] $: the probability that the above-mentioned event holds true.
		\item $ d_{t}^{\pi} = p_{t-1}d_{t,c}^{\pi} + (1-p_{t-1})d_{t,e}^{\pi} $: the state distribution at $ t $.
		\item $ \epsilon_{t,c} $: the probability that $ \pi $ chooses a different action than $ \pi^* $ in $ d_{t,c}^{\pi} $.
		\item $ \epsilon_{t,e} $: the probability that $ \pi $ chooses a different action than $ \pi^* $ in $ d_{t,e}^{\pi} $.
		\item $ \epsilon_{t} = p_{t-1}\epsilon_{t,c} + (1-p_{t-1})\epsilon_{t,e} $: the probability that $ \pi $ chooses a different action than $ \pi^* $ in $ d_{t}^{\pi} $.
		\item $ C_{t,c} $: the expected immediate cost of executing $ \pi $ in $ d_{t,c}^{\pi} $.
		\item $ C_{t,e} $: the expected immediate cost of executing $ \pi $ in $ d_{t,e}^{\pi} $.
		\item $ C_{t} = p_{t-1}C_{t,c} + (1-p_{t-1})C_{t,e} $: the expected immediate cost of executing $ \pi $ in $ d_{t}^{\pi} $.
		\item $ C_{t,c}^{*} $: the expected immediate cost of executing $ \pi^{*} $ in $ d_{t,c}^{\pi} $.

		\item $ C_{max} $: the upper bound of an expected immediate cost.
		\item $ J\learn  = \sum_{t=1}^{T} C_{t}$: the cost-to-go of executing $ \pi $ for $ T $ steps.
		\item $ J\expert = \sum_{t=1}^{T} C_{t,c}^* $: the cost-to-go of executing $ \pi^* $ for $ T $ steps.
	\end{itemize}
    \noindent The probability that the learner chooses at least one different action than the expert in the first $ t $ steps is: 
	\begin{equation*}
		\left(1-p_{t}\right) = \left(1-p_{t-1}\right) + p_{t-1}\epsilon_{t,c}.
		\label{aux0}
	\end{equation*}
	\noindent This gives us $ \left(1-p_{t}\right) \le (1-p_{t-1}) + \epsilon_{t} $ since $ p_{t-1} \in \left[0,1\right]$. Solving this recurrence we arrive at:
	\begin{equation*}
		1 - p_{t} \le \sum_{i=1}^{t}\epsilon_{i}.
		\label{aux1}
	\end{equation*}
	\noindent Now consider in state distribution $ d_{t,c}^{\pi} $, if $ \pi $ chooses a different action than $ \pi^* $  with probability $ \epsilon_{t,c} $, then $ \pi $ will incur a cost at most $ C_{max} $ more than $ \pi^* $. This can be represented as:
	\begin{equation*}
		C_{t,c} \le C_{t,c}^{*} + \epsilon_{t,c}C_{max}.
		\label{aux2}
	\end{equation*}
	\noindent Thus, we have:
	\begin{equation*} 
		\begin{split}
		C_{t} & = p_{t-1}C_{t,c} + (1 - p_{t-1})C_{t,e} \\
		& \le p_{t-1}C_{t,c}^{*} + p_{t-1}\epsilon_{t,c}C_{max} + (1-p_{t-1})C_{max} \\
		& = p_{t-1}C_{t,c}^{*} + (1 - p_{t})C_{max} \\
		& \le C_{t,c}^{*} + (1 - p_{t})C_{max}\\
		& \le C_{t,c}^{*} + C_{max}\sum_{i=1}^{t}\epsilon_{i}.
		\end{split}
	\end{equation*}
	
	\noindent We sum the above result over $ T $ steps and use the fact $ \frac{1}{T}\sum_{t=1}^{T}\epsilon_{t} \le \epsilon $: 
	
	\begin{equation*} 
	\begin{split}
	J\learn & \le J\expert + C_{max}\sum_{t=1}^{T}\sum_{i=1}^{t}\epsilon_{i} \\
	& = J\expert + C_{max}\sum_{t=1}^{T}(T+1-t)\epsilon_{t}\\
	& \le J\expert + C_{max}T\sum_{t=1}^{T}\epsilon_{t}\\
	& \le J\expert + C_{max}T^2\epsilon.
	\end{split}
	\end{equation*}

\end{proof}


\subsubsection{DAGGER}

The following proof is adapted from Ross et al.~\cite{ross2011reduction}. We include it here for completeness. Note that for Theorem~\ref{thm:dagger}, we have arrived at the different third term as of Ross et al.~\cite{ross2011reduction}. 

\begin{lemma}
	\cite{ross2011reduction} Let $ P $ and $ Q $ be any two distributions over elements $ x \in \mathcal{X} $ and $ f: \mathcal{X} \rightarrow \mathbb{R} $, any bounded function such that $ f(x) \in \left[a,b\right] $ for all $ x \in \mathcal{X} $. Let the range $ r = b-a $. Then $ |\mathbb{E}_{x \sim P}\left[f(x)\right] - \mathbb{E}_{x \sim Q}\left[f(x)\right] | \le \frac{r}{2}\lVert P-Q \rVert_{1} $.
	\label{lemma-1}
\end{lemma}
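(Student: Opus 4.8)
The plan is to express the difference of expectations as a single sum (or integral) against the signed measure $P - Q$, and then exploit the fact that this signed measure has total mass zero to recenter $f$ within its range. This recentering is precisely what produces the factor $\frac{r}{2}$ rather than a worse constant.

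First I would write, treating $P$ and $Q$ as probability mass functions (the continuous case being identical with an integral against densities),
\[
\mathbb{E}_{x \sim P}[f(x)] - \mathbb{E}_{x \sim Q}[f(x)] = \sum_{x \in \mathcal{X}} \bigl(P(x) - Q(x)\bigr) f(x).
\]
Applying the triangle inequality directly and bounding $|f(x)|$ by $\max(|a|,|b|)$ would only give a constant strictly worse than $\frac{r}{2}$, so the key observation is that since $P$ and $Q$ are both probability distributions, $\sum_{x} \bigl(P(x) - Q(x)\bigr) = 1 - 1 = 0$. Consequently, for \emph{any} constant $c$,
\[
\sum_{x} \bigl(P(x) - Q(x)\bigr) f(x) = \sum_{x} \bigl(P(x) - Q(x)\bigr) \bigl(f(x) - c\bigr).
\]

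Next I would choose $c = \frac{a+b}{2}$, the midpoint of the range, so that $|f(x) - c| \le \frac{b-a}{2} = \frac{r}{2}$ uniformly over $x \in \mathcal{X}$. Applying the triangle inequality together with this uniform bound yields
\[
\left| \sum_{x} \bigl(P(x) - Q(x)\bigr)\bigl(f(x) - c\bigr) \right| \le \sum_{x} |P(x) - Q(x)| \cdot |f(x) - c| \le \frac{r}{2} \sum_{x} |P(x) - Q(x)|,
\]
and recognizing $\sum_{x} |P(x) - Q(x)| = \lVert P - Q \rVert_{1}$ completes the argument.

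There is no genuine analytical obstacle here; the entire content of the proof is the recentering step, and the only thing requiring care is justifying that subtracting $c$ is legitimate. That justification rests entirely on $P$ and $Q$ each summing to one, and without it the bound would degrade to $\max(|a|,|b|)\,\lVert P - Q \rVert_{1}$. For full rigor in the continuous setting I would note that the same identity holds with integrals against densities with respect to a dominating measure, and that the boundedness of $f$ guarantees absolute convergence of all sums (or integrals), so every manipulation above is valid.
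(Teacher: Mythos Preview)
Your proof is correct and follows essentially the same approach as the paper: express the difference of expectations against the signed measure $P-Q$, use the zero-total-mass property to subtract an arbitrary constant, choose the midpoint $c=\frac{a+b}{2}$, and bound by $\frac{r}{2}\lVert P-Q\rVert_1$. If anything, your version is slightly more explicit in justifying why the subtraction of $c$ is permitted.
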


\begin{proof}
	\begin{equation*} 
	\begin{split}
	& |\mathbb{E}_{x \sim P}\left[f(x)\right] - \mathbb{E}_{x \sim Q}\left[f(x)\right] | \\
	& = | \int_{x}P(x)f(x)dx - \int_{x}Q(x)f(x)dx |\\
	& = | \int_{x}f(x)\left(P(x)-Q(x)\right)dx |\\
	& = | \int_{x}\left(f(x)-c\right)\left(P(x)-Q(x)\right)dx |, \forall c \in \mathbb{R}\\
	& \le \int_{x}|f(x)-c||P(x)-Q(x)|dx \\
	& \le \max_{x}|f(x)-c|\int_{x}|P(x)-Q(x)|dx \\
	& = \max_{x}|f(x)-c|\lVert P-Q \rVert_{1}. 
	\end{split}
	\end{equation*}
	Taking $ c = a + \frac{r}{2} $ leads to $ \max_{x}|f(x)-c| \le \frac{r}{2} $ and proves the lemma.
\end{proof}

\begin{lemma}
	\cite{ross2011reduction} Let $ \hat{\pi}_{i} $ be the learned policy, $ \pi^* $ be the expert's policy, and $ \pi_{i} $ be the policy used to collect training data with probability $ \beta_{i} $ executing $ \pi^* $ and probability $  1 - \beta_{i}$ executing $ \hat{\pi}_{i} $ over $ T $ steps. Then, we have $\lVert d_{\pi_{i}} - d_{\hat{\pi}_{i}} \rVert_{1} \le 2\min(1, T\beta_{i})$.
\label{lemma-2}
\end{lemma}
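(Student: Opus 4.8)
The plan is to prove the two bounds appearing in the minimum separately and then combine them. For the constant bound, I would simply invoke the fact that for any two probability distributions $P$ and $Q$ one has $\lVert P - Q \rVert_1 \le 2$; applied to $d_{\pi_i}$ and $d_{\hat{\pi}_i}$ this already gives the ``$1$'' side of the $\min$, i.e. $\lVert d_{\pi_i} - d_{\hat{\pi}_i} \rVert_1 \le 2$.

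For the $T\beta_i$ side, the idea is to condition on whether, over the $T$ steps, the mixture policy $\pi_i$ ever actually invokes the expert $\pi^*$. Since at each step $\pi_i$ independently executes $\pi^*$ with probability $\beta_i$ and $\hat{\pi}_i$ with probability $1-\beta_i$, the probability that it executes $\hat{\pi}_i$ at \emph{every} one of the $T$ steps is $(1-\beta_i)^T$, which by Bernoulli's inequality (equivalently, a union bound over the $T$ steps) is at least $1 - T\beta_i$. Write $p := (1-\beta_i)^T$. Conditioned on the event that $\pi_i$ selects $\hat{\pi}_i$ at all $T$ steps, the induced trajectory distribution — and hence the induced average state distribution — is \emph{identical} to that obtained by running $\hat{\pi}_i$ itself, namely $d_{\hat{\pi}_i}$. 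Conditioned on the complementary event (probability $1-p$), the induced average state distribution is some valid probability distribution $d'$ whose exact form is irrelevant. This yields the decomposition $d_{\pi_i} = p\, d_{\hat{\pi}_i} + (1-p)\, d'$, and hence
\[
\lVert d_{\pi_i} - d_{\hat{\pi}_i} \rVert_1 = (1-p)\,\lVert d' - d_{\hat{\pi}_i} \rVert_1 \le 2(1-p) \le 2T\beta_i,
\]
using $\lVert d' - d_{\hat{\pi}_i} \rVert_1 \le 2$ and $1 - p = 1 - (1-\beta_i)^T \le T\beta_i$. Combining this with the trivial bound $\lVert d_{\pi_i} - d_{\hat{\pi}_i} \rVert_1 \le 2$ gives $\lVert d_{\pi_i} - d_{\hat{\pi}_i} \rVert_1 \le 2\min(1, T\beta_i)$.

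The argument is elementary, so I do not expect a serious obstacle; the one point that needs to be stated carefully is the decomposition step — namely, justifying that $d_{\pi_i}$ splits as a convex combination indexed by the ``all-$\hat{\pi}_i$'' event and its complement, with the first component exactly equal to $d_{\hat{\pi}_i}$. This holds because the per-step coin flips that choose between $\pi^*$ and $\hat{\pi}_i$ are drawn independently of the state trajectory, so conditioning on ``every flip chose $\hat{\pi}_i$'' does not bias the resulting state distribution relative to simply executing $\hat{\pi}_i$ from the start. Once this is spelled out, the rest is just the triangle inequality and Bernoulli's inequality.
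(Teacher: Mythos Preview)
Your proposal is correct and follows essentially the same argument as the paper: decompose $d_{\pi_i}$ as $(1-\beta_i)^T d_{\hat{\pi}_i} + \bigl(1-(1-\beta_i)^T\bigr) d'$ by conditioning on whether $\pi^*$ is ever invoked, apply the trivial bound $\lVert d' - d_{\hat{\pi}_i}\rVert_1 \le 2$, and use Bernoulli's inequality $1-(1-\beta_i)^T \le T\beta_i$. Your write-up is in fact a bit more careful than the paper's in justifying why the ``all-$\hat{\pi}_i$'' component equals $d_{\hat{\pi}_i}$ exactly.
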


\begin{proof}
	In contrast to $ d_{\hat{\pi}_{i}} $ which is the state distribution as the result of solely executing $ \hat{\pi}_{i} $, we denote $ d $ as the state distribution as the result of $ \pi_{i} $ executing $ \pi^* $ at least once over $ T $ steps. This gives us $ d_{\pi_{i}} = (1-\beta_{i})^Td_{\hat{\pi}_{i}} + \left(1-(1-\beta_{i})^T\right)d $. We also have the facts that for any two distributions $ P $ and $ Q $, $\lVert P-Q \rVert_{1} \le 2$ and $(1-\beta)^T \ge 1-\beta T, \forall \beta \in \left[0,1\right] $. Then, we have $ \lVert d_{\pi_{i}} - d_{\hat{\pi}_{i}} \rVert_{1} \le 2 $ and can further show:
	\begin{equation*} 
	\begin{split}
	\lVert d_{\pi_{i}} - d_{\hat{\pi}_{i}} \rVert_{1} & = \left(1-(1-\beta_{i})^T\right) \lVert d - d_{\hat{\pi}_{i}} \rVert_{1}\\
	& \le 2\left(1-(1-\beta_{i})^T\right) \\
	& \le 2T\beta_{i}.
	\end{split}
	\end{equation*}
\end{proof}

\begin{theorem}
	\cite{ross2011reduction} If the surrogate loss $ l \in \left[0,l_{max}\right] $ is the same as the cost function $ C $ or upper bounds it, then after $ N $ iterations of DAGGER:
	\begin{equation}
	J\left(\hat{\pi}\right) \le J\left(\bar{\pi}\right) \le T\epsilon_{min} + T\epsilon_{regret} + \mathcal{O}(\frac{f(T,l_{max})}{N}).
	\end{equation}
	\label{thm:dagger}
\end{theorem}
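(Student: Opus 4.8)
The plan is to follow the structure of Ross et al.~\cite{ross2011reduction} and reduce the analysis to an online learning argument, combining Lemma~\ref{lemma-1} and Lemma~\ref{lemma-2} as the key technical tools. First I would set up the reduction: at each iteration $i$, DAGGER collects data under the mixture policy $\pi_i$ (executing $\pi^*$ with probability $\beta_i$, otherwise $\hat{\pi}_i$), trains $\hat{\pi}_{i+1}$ by standard supervised learning on the aggregated dataset, and defines $\bar{\pi}$ as the uniformly-random policy over $\{\hat{\pi}_1,\ldots,\hat{\pi}_N\}$. The goal is to bound $J(\bar{\pi})$ by the average performance of the individual $\hat{\pi}_i$'s, which in turn is controlled by the online learning losses $l_i(\hat{\pi}_i)$.

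The main chain of inequalities I would carry out is: (1) relate $J(\hat\pi_i)$ for a single policy to $T$ times its expected surrogate loss under $d_{\hat\pi_i}$, using the assumption that $l$ upper bounds $C$ and a telescoping/per-step argument analogous to the supervised-learning theorem (Theorem~\ref{thm:sl}); (2) swap the data-collection distribution $d_{\pi_i}$ for the policy's own distribution $d_{\hat\pi_i}$ at a cost controlled by Lemma~\ref{lemma-2}, which gives $\lVert d_{\pi_i}-d_{\hat\pi_i}\rVert_1 \le 2\min(1,T\beta_i)$; combined with Lemma~\ref{lemma-1} (with range $r=l_{max}$) this contributes an error term of order $l_{max}\min(1,T\beta_i)$ per iteration; (3) average over the $N$ iterations, introduce $\epsilon_{min}=\min_{\pi}\frac1N\sum_i l_i(\pi)$ and the average regret $\epsilon_{regret}=\frac1N\sum_i l_i(\hat\pi_i)-\epsilon_{min}$, so that $\frac1N\sum_i l_i(\hat\pi_i) = \epsilon_{min}+\epsilon_{regret}$; and (4) bound $J(\hat\pi)\le J(\bar\pi)$ trivially by choosing $\hat\pi$ as the best policy in the sequence, then collect the residual $\frac1N\sum_i l_{max}\min(1,T\beta_i)$ into the $\mathcal{O}(f(T,l_{max})/N)$ term by choosing a schedule for $\beta_i$ (e.g.\ $\beta_i$ decaying so that $\sum_i \min(1,T\beta_i)$ is $\mathcal{O}(f(T,l_{max}))$, independent of $N$, for instance $\beta_i = (1-\alpha)^{i-1}$ or $\beta_i$ of order $1/i$ truncated).

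The step I expect to be the main obstacle is step (4): pinning down the precise form of $f(T,l_{max})$ and showing the residual sum is genuinely $o(N)$ requires care about the $\beta_i$ schedule and the $\min(1,T\beta_i)$ truncation — this is exactly where the excerpt notes ``we have arrived at the different third term as of Ross et al.'', so I would need to be explicit that with a finite-horizon no-regret schedule the sum $\sum_{i=1}^N \min(1,T\beta_i)$ is bounded by a constant depending only on $T$ and $l_{max}$ (hence the $1/N$ decay after dividing by $N$), rather than inheriting the $\log N$-type bound that appears in the original DAGGER statement. A secondary subtlety is step (1): making rigorous that a single execution of $\hat\pi_i$ has cost-to-go at most $T$ times its own expected loss requires the same $p_{t,c}/p_{t,e}$ bookkeeping as in Theorem~\ref{thm:sl}, so I would either invoke that lemma directly or restate the one-line per-step cost bound $C_t \le C_{t,c}^* + (1-p_t)C_{max}$ and note that summing the surrogate-loss version gives the $T$ factor. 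Everything else — the no-regret guarantee driving $\epsilon_{regret}\to 0$, and the observation that $\epsilon_{min}$ is the in-hindsight training loss — is standard and can be cited from \cite{hazan2007logarithmic,kakade2009generalization}.
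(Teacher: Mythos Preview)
Your plan matches the paper's proof almost exactly: apply Lemma~\ref{lemma-1} and Lemma~\ref{lemma-2} to get $L(\hat{\pi}_i)\le l_i(\hat{\pi}_i)+l_{max}\min(1,T\beta_i)$, average over $i$ to introduce $\epsilon_{min}+\epsilon_{regret}$, multiply by $T$, and control the residual $\frac{l_{max}}{N}\sum_i\min(1,T\beta_i)$ with the geometric schedule $\beta_i=(1-\alpha)^{i-1}$, splitting the sum at $n_\beta=\max\{n:\beta_n>1/T\}$.

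One correction to your step~(1): it is far simpler than you suggest, and invoking Theorem~\ref{thm:sl} or the $p_{t,c}/p_{t,e}$ coupling would be a detour in the wrong direction. That machinery compares $J(\pi)$ to $J(\pi^*)$; here no comparison to the expert's cost-to-go is made at all. Since $d_{\hat{\pi}_i}=\frac{1}{T}\sum_{t}d_{\hat{\pi}_i}^t$ and $l$ upper-bounds $C$, one has immediately $J(\hat{\pi}_i)=\sum_t\mathbb{E}_{\phi\sim d_{\hat{\pi}_i}^t}[C(\phi,\hat{\pi}_i)]\le T\,\mathbb{E}_{\phi\sim d_{\hat{\pi}_i}}[l(\phi,\hat{\pi}_i,a^*)]=T\,L(\hat{\pi}_i)$. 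The paper handles this in one phrase (``Summing over $T$''), and the per-step bound $C_t\le C_{t,c}^*+(1-p_t)C_{max}$ you mention plays no role.
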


\begin{proof}
	Let $ l_{i}\left(\pi\right) =  \mathbb{E}_{\phi \sim d_{\pi_{i}}, a^* \sim \pi^*(\phi)} \left[l\left(\phi,\pi, a^*\right)\right]]$ be the expected loss of any policy $ \pi \in \Pi $ under the state distribution induced by the learned policy $ \pi_{i} $ at the $ i $th iteration and $ \epsilon_{min} = \min_{\pi \in \Pi} \frac{1}{N}\sum_{i=1}^{N}l_{i}(\pi) $ be the minimal loss in hindsight after $ N \ge i $ iterations. Then, $ \epsilon_{regret} = \frac{1}{N}\sum_{i=1}^{N}l_{i}(\pi_{i}) - \epsilon_{min}$ is the average regret of this online learning program. In addition, we denote the expected loss of any policy  $ \pi \in \Pi $ under its own induced state distribution as $ L\left(\pi\right) =  \mathbb{E}_{\phi \sim d_{\pi}, a^* \sim \pi^*(\phi)} \left[l\left(\phi,\pi, a^*\right)\right]]$ and consider $ \bar{\pi} $ as the mixed policy that samples the policies $ \{\hat{\pi}_{i}\}_{i=1}^N $ uniformly at the beginning of each trajectory. Using Lemma~\ref{lemma-1} and Lemma~\ref{lemma-2}, we can show:
	
	\begin{equation*} 
	\begin{split}
	L(\hat{\pi}_{i}) & = \mathbb{E}_{\phi \sim d_{\hat{\pi}_{i}}, a^* \sim \pi^*(\phi)} \left[l\left(\phi,\hat{\pi}_{i}, a^*\right)\right]\\
	& \le \mathbb{E}_{\phi \sim d_{\pi_{i}}, a^* \sim \pi^*(\phi)} \left[l\left(\phi,\hat{\pi}_{i}, a^*\right)\right] + \frac{l_{max}}{2}\lVert d_{\pi_{i}} - d_{\hat{\pi}_{i}} \rVert_{1} \\
	& \le \mathbb{E}_{\phi \sim d_{\pi_{i}}, a^* \sim \pi^*(\phi)} \left[l\left(\phi,\hat{\pi}_{i}, a^*\right)\right] + l_{max}\min\left(1,T\beta_{i}\right) \\
	& = l_{i}\left(\hat{\pi}_{i}\right) + l_{max}\min\left(1,T\beta_{i}\right).
	\end{split}
	\end{equation*}
	
	\noindent By further assuming $ \beta_{i} $ is monotonically decreasing and $ n_{\beta} = \argmax_{n} (\beta_{n} > \frac{1}{T}), n \le N $, we have the following:
	
	\begin{equation*} 
	\begin{split}
	\min_{i \in 1:N} L(\hat{\pi}_{i}) & \le L(\bar{\pi})\\
	& = \frac{1}{N} \sum_{i=1}^{N}L(\hat{\pi}_{i}) \\
	& \le \frac{1}{N} \sum_{i=1}^{N}l_{i}(\hat{\pi}_{i}) + \frac{l_{max}}{N} \sum_{i=1}^{N}\min\left(1,T\beta_{i}\right)   \\
	& = \epsilon_{min} + \epsilon_{regret} + \frac{l_{max}}{N} \left[n_{\beta} + T\sum_{i=n_{\beta}+1}^{N}\beta_{i}\right].
	\end{split}
	\end{equation*}
	
	\noindent Summing over $ T $ gives us:
	
	\begin{equation*} 
	J(\bar{\pi}) \le T\epsilon_{min} + T\epsilon_{regret} + \frac{Tl_{max}}{N} \left[n_{\beta} + T\sum_{i=n_{\beta}+1}^{N}\beta_{i}\right].
	\end{equation*}
	
	\noindent Define $ \beta_{i} = (1-\alpha)^{i-1} $, in order to have $ \beta_{i} \le \frac{1}{T} $, we need $ (1-\alpha)^{i-1} \le \frac{1}{T} $ which gives us $ i \le 1 + \frac{\log{\frac{1}{T}}}{\log{(1-\alpha)}} $. In addition, note now $ i = n_{\beta}$ and $\sum_{i=n_{\beta}+1}^{N}\beta_{i} = \frac{(1-\alpha)^{n\beta} - (1-\alpha)^N}{\alpha} \le \frac{1}{T\alpha}$, continuing the above derivation, we have:
	
	\begin{equation*}
		J\left(\bar{\pi}\right) \le T\epsilon_{min} + T\epsilon_{regret} + \frac{Tl_{max}}{N}\left(1 + \frac{\log{\frac{1}{T}}}{\log{(1-\alpha)}} + \frac{1}{\alpha}\right).
	\end{equation*}
	
	\noindent Given the fact $J\left(\hat{\pi}\right) = \min_{i \in 1:N} J(\hat{\pi}_{i}) \le J(\bar{\pi}) $ and representing the third term as $ \mathcal{O}(\frac{f(T,l_{max})}{N}) $, we have proved the theorem.
	
\end{proof}

\subsubsection{ADAPS}
\label{sec:app-adaps-proof}

With the assumption that we can treat the generated trajectories from our model and the additional data generated based on them as running a learned policy to sample independent expert trajectories at different states while performing policy roll-out, we have the following guarantee of ADAPS. To better understand the following theorem and proof, we recommend interested readers to read the proofs of Theorem~\ref{thm:sl} and~\ref{thm:dagger} first. 

\begin{theorem}
	If the surrogate loss $ l $ upper bounds the true cost $ C $, by collecting $ K $ trajectories using ADAPS at each iteration, with probability at least $ 1 - \mu $, $ \mu \in (0,1) $, we have the following guarantee:
	\begin{equation*}
	J\left(\hat{\pi}\right) \le J\left(\bar{\pi}\right) \le T\hat{\epsilon}_{min} + T\hat{\epsilon}_{regret} + \mathcal{O}\left(Tl_{max}\sqrt{\frac{\log{\frac{1}{\mu}}}{KN}}\right).
	\end{equation*}
\end{theorem}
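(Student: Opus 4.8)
The plan is to mirror the structure of the DAGGER analysis (Theorem~\ref{thm:dagger}), but replace the $\min(1, T\beta_i)$ distribution-mismatch term — which came from mixing in the expert policy under a reset model — with a finite-sample concentration term coming from estimating the loss $l_i(\pi)$ from $K$ sampled trajectories rather than from the true induced distribution $d_{\pi_i}$. The key conceptual point, already flagged in the text preceding the theorem, is that ADAPS gets to sample independent expert trajectories at arbitrary states via the generative model, so there is no $\|d_{\pi_i} - d_{\hat\pi_i}\|_1$ gap to pay for: the policy used to collect data \emph{is} the learned policy $\pi_i$ (no $\beta_i$-mixing). What remains is purely a generalization/estimation error.

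First I would set up notation exactly as in the DAGGER proof: let $l_i(\pi) = \mathbb{E}_{\phi \sim d_{\pi_i}, a^* \sim \pi^*(\phi)}[l(\phi,\pi,a^*)]$ be the true expected loss at iteration $i$, let $\hat l_i(\pi)$ be its empirical estimate from the $K$ trajectories collected at iteration $i$, and define $\hat\epsilon_{min} = \min_{\pi \in \Pi}\frac1N\sum_{i=1}^N \hat l_i(\pi)$ and $\hat\epsilon_{regret} = \frac1N\sum_{i=1}^N \hat l_i(\pi_i) - \hat\epsilon_{min}$ as the \emph{empirical} minimal loss in hindsight and empirical average regret. Since there is no policy-mixing, $L(\hat\pi_i) = l_i(\hat\pi_i)$ exactly, so $\min_{i \in 1:N} L(\hat\pi_i) \le L(\bar\pi) = \frac1N\sum_{i=1}^N l_i(\hat\pi_i)$, and the whole problem reduces to bounding $\frac1N\sum_{i=1}^N l_i(\pi_i)$ in terms of $\frac1N\sum_{i=1}^N \hat l_i(\pi_i)$ plus a deviation term.

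Next I would invoke a concentration inequality. The $KN$ sampled trajectory losses are bounded in $[0, l_{max}]$, and (under the stated independence assumption on the generated trajectories) their average concentrates around its mean; by Hoeffding's inequality, with probability at least $1-\mu$,
\begin{equation*}
\Big|\frac1N\sum_{i=1}^N l_i(\pi_i) - \frac1N\sum_{i=1}^N \hat l_i(\pi_i)\Big| \le l_{max}\sqrt{\frac{\log\frac1\mu}{2KN}} = \mathcal{O}\!\left(l_{max}\sqrt{\frac{\log\frac1\mu}{KN}}\right).
\end{equation*}
Combining, $L(\bar\pi) \le \hat\epsilon_{min} + \hat\epsilon_{regret} + \mathcal{O}(l_{max}\sqrt{\log(1/\mu)/(KN)})$, and summing over the $T$ steps (as in the last line of the DAGGER proof, using $J(\hat\pi) = \min_{i\in 1:N} J(\hat\pi_i) \le J(\bar\pi)$ and the surrogate-loss-upper-bounds-cost hypothesis) yields the claimed bound with the $T l_{max}\sqrt{\log(1/\mu)/(KN)}$ generalization term.

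The main obstacle — and the place the argument is really leaning on an assumption rather than a theorem — is justifying the independence needed for the concentration step: the trajectories generated by \textit{SimExpert} at iteration $i$ are produced by a deterministic principled simulator from correlated nearby states, and $\pi_i$ itself depends on all data from iterations $1,\dots,i-1$, so the $KN$ samples are not literally i.i.d. The honest move is to state explicitly (as the paper does in the paragraph before the theorem) that we \emph{treat} these generated trajectories as independent expert roll-outs at different states — invoking that DNN training achieves small training error and that the parameterized model gives near-exhaustive coverage of the vehicle's configuration space — and then the Hoeffding/Azuma step goes through either directly or via a martingale version over the $N$ iterations. I would also note that the regret term $\hat\epsilon_{regret}$ vanishes if a no-regret online learner such as Follow-the-Leader is used, which sets up the Corollary cleanly.
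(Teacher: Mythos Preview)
Your proposal is correct and follows essentially the same approach as the paper's proof. The paper's argument is exactly what you outline: no $\beta_i$-mixing so $L(\hat{\pi}_i) = l_i(\hat{\pi}_i)$, then a concentration bound on $\frac{1}{KN}\sum_{i,k}\bigl(l_i(\hat{\pi}_i) - \hat{l}_{ik}(\hat{\pi}_i)\bigr)$, then sum over $T$. The only presentational difference is that the paper commits to the martingale route you mention as a fallback: it sets $Y_{i,k} = l_i(\hat{\pi}_i) - \hat{l}_{ik}(\hat{\pi}_i)$, defines the partial sums $X_{nK+m}$ as a martingale (handling the dependence of $\pi_i$ on prior data), and applies Azuma--Hoeffding to get $\frac{1}{KN}X_{KN} \le l_{max}\sqrt{2\log(1/\mu)/(KN)}$, rather than invoking plain Hoeffding under an i.i.d.\ assumption.
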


\begin{proof}
	 Assuming at the $ i $th iteration, our model generates $ K $ trajectories. These trajectories are independent from each other since they are generated using different parameters and at different states during the analysis of an accident. For the $ k $th trajectory, $ k \in [\![1, K]\!] $, we can construct an estimate $ \hat{l}_{ik}(\hat{\pi}_{i}) = \frac{1}{T}\sum_{t=1}^{T}l_{i}\left(\phi_{ikt},\hat{\pi}_{i}, a^{*}_{ikt}\right) $, where $ \hat{\pi}_{i} $ is the  learned policy from data gathered in previous $ i-1 $ iterations. Then, the approximated expected loss $ \hat{l}_{i} $ is the average of these $ K $ estimates: $ \hat{l}_{i}(\hat{\pi}_{i}) = \frac{1}{K}\sum_{k=1}^{K}\hat{l}_{ik}(\hat{\pi}_{i}) $. We denote $ \hat{\epsilon}_{min} = \min_{\pi \in \Pi} \frac{1}{N}\sum_{i=1}^{N}\hat{l}_{i}(\pi) $ as the approximated minimal loss in hindsight after $ N $ iterations, then $ \hat{\epsilon}_{regret} = \frac{1}{N}\sum_{i=1}^{N}\hat{l}_{i}(\hat{\pi}_{i}) - \hat{\epsilon}_{min}$ is the approximated average regret. 
	 
	 Let $ Y_{i,k} = l_{i}(\hat{\pi}_{i}) - \hat{l}_{ik}(\hat{\pi}_{i}) $ and define random variables $ X_{nK+m} = \sum_{i=1}^{n}\sum_{k=1}^{K}Y_{i,k} + \sum_{k=1}^{m}Y_{n+1,k} $, for $ n \in [\![0, N-1]\!] $ and $ m \in [\![1, K]\!] $. Consequently, $ \{X_{i}\}_{i=1}^{NK} $ form a martingale and $ |X_{i+1} - X_{i}| \le l_{max} $. By Azuma-Hoeffding's inequality, with probability at least $ 1 - \mu $, we have $ \frac{1}{KN}X_{KN} \le l_{max}\sqrt{\frac{2\log{\frac{1}{\mu}}}{KN}} $. 
	 
	 Next, we denote the expected loss of any policy  $ \pi \in \Pi $ under its own induced state distribution as $ L\left(\pi\right) =  \mathbb{E}_{\phi \sim d_{\pi}, a^* \sim \pi^*(\phi)} \left[l\left(\phi,\pi, a^*\right)\right]]$ and consider $ \bar{\pi} $ as the mixed policy that samples the policies $ \{\hat{\pi}_{i}\}_{i=1}^N $ uniformly at the beginning of each trajectory. At each iteration, during the data collection, we only execute the learned policy instead of mix it with the expert's policy, which leads to $ L(\hat{\pi}_{i}) = l(\hat{\pi}_{i})$. Finally, we can show:
	 
 	\begin{equation*} 
 	\begin{split}
 	\min_{i \in 1:N} L(\hat{\pi}_{i}) & \le L(\bar{\pi})\\
 	& = \frac{1}{N} \sum_{i=1}^{N}L(\hat{\pi}_{i}) \\
 	& = \frac{1}{N} \sum_{i=1}^{N}l_{i}(\hat{\pi}_{i}) \\
 	& = \frac{1}{KN} \sum_{i=1}^{N}\sum_{k=1}^{K}\left(\hat{l}_{ik}(\hat{\pi}_{i}) +  Y_{i,k}\right)    \\
 	& = \frac{1}{KN} \sum_{i=1}^{N}\sum_{k=1}^{K}\hat{l}_{ik}(\hat{\pi}_{i}) +  \frac{1}{KN}X_{KN}    \\
 	& = \frac{1}{N} \sum_{i=1}^{N}\hat{l}(\hat{\pi}_{i}) + \frac{1}{KN}X_{KN}\\
 	& \le \frac{1}{N} \sum_{i=1}^{N}\hat{l}(\hat{\pi}_{i}) +  l_{max}\sqrt{\frac{2\log{\frac{1}{\mu}}}{KN}}  \\
 	& = \hat{\epsilon}_{min} + \hat{\epsilon}_{regret} + l_{max}\sqrt{\frac{2\log{\frac{1}{\mu}}}{KN}}.
 	\end{split}
 	\end{equation*}
 	Summing over $ T $ proves the theorem.
\end{proof}

\subsection{Network Specification}
\label{sec:network-specs}

All modules within our control mechanism share a similar network architecture that combines Long Short-Term Memory (LSTM)~\cite{hochreiter1997long} and Convolutional Neural Networks (CNN)~\cite{lecun2015deep}. Each image will first go through a CNN and then be combined with other images to form a training sample to go through a LSTM. The number of images of a training sample is empirically set to 5. We use the many-to-many mode of LSTM and set the number of hidden units of the LSTM to 100. The output is the average value of the output sequence.  

The CNN consists of eight layers. The first five are convolutional layers and the last three are dense layers. The kernel size is $ 5\times5 $ in the first three convolutional layers and $ 3\times3 $ in the other two convolutional layers. The first three convolutional layers have a stride of two while the last two convolutional layers are non-strided. The filters for the five convolutional layers are 24, 36, 48, 64, 64, respectively. All convolutional layers use VALID padding. The three dense layers have 100, 50, and 10 units, respectively. We use ELU as the activation function and $ \mathcal{L}2 $ as the kernel regularizer set to 0.001 for all layers. 


We train our model using Adam~\cite{kingma2014adam} with initial learning rate set to 0.0001. The batch size is 128 and the number of epochs is 500.
For training \textit{Detection} (a classification task), we use Softmax for generating the output and categorical cross entropy as the loss function.
For training \textit{Following} and \textit{Avoidance} (regression tasks), we use mean squared error (MSE) as the loss function. We have also adopted cross-validation with 90/10 split. The input image data have $ 220 \times 66 $ resolution in RGB channels.

\subsection{Example Expert Trajectories}
\label{sec:app-traj}
Figure~\ref{fig:trajectories} shows a set of generated trajectories for a situation where the vehicle had collided with a static obstacle in front of it after driving on a straight road.
As expected, the trajectories feature sharper turns (red trajectories) as the starting state tends towards the last moment that the vehicle can still avoid the obstacle. 

\begin{figure}
	\begin{center}
		\includegraphics[width=\columnwidth]{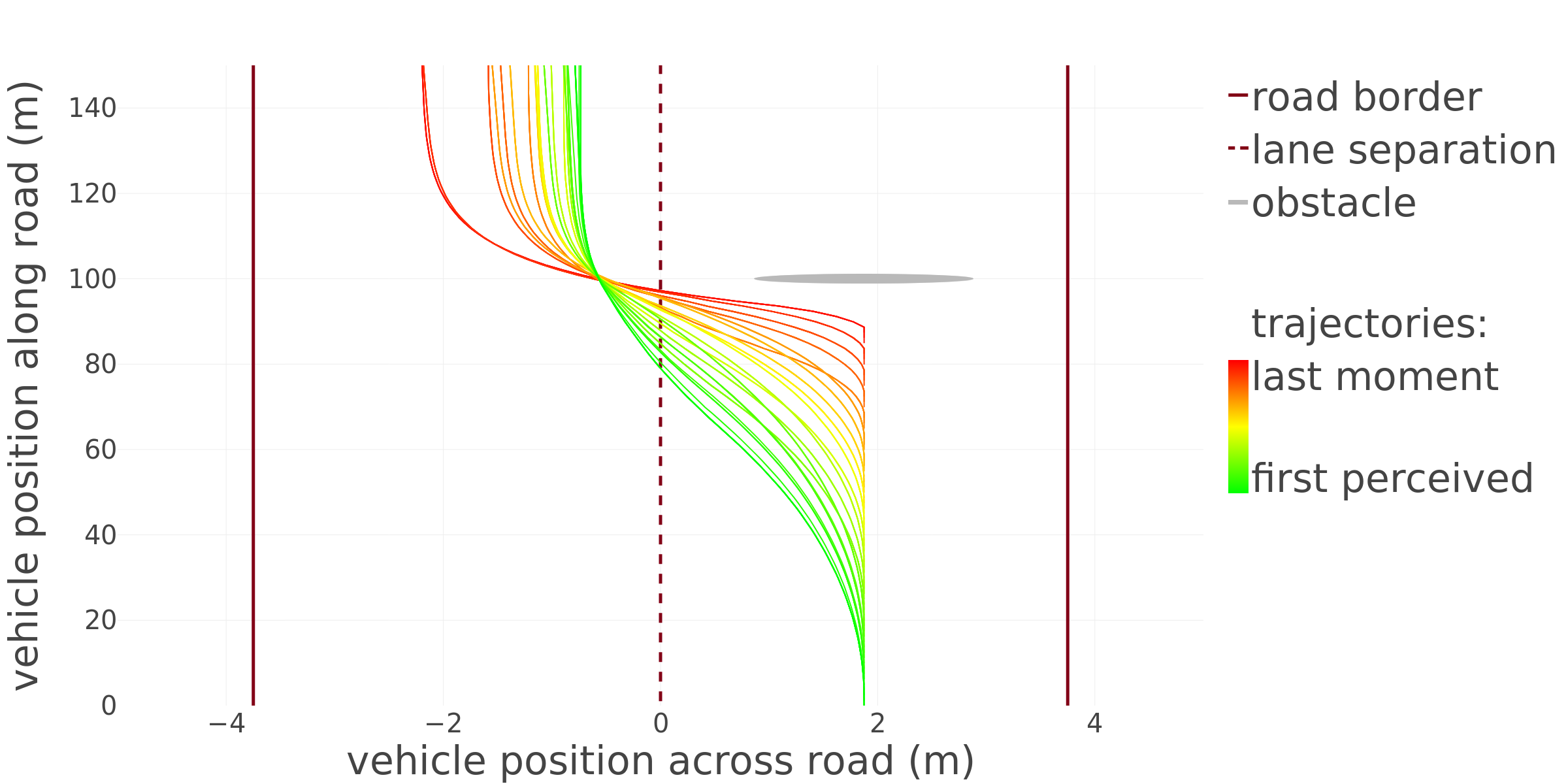}
	\end{center}
	\caption{
		Plotted collision-free trajectories generated by the expert algorithm for a vehicle traveling on the right lane of a straight road, with an obstacle in front.
		Spans 74 trajectories from the first moment the vehicle perceives the obstacle (green, progressive avoidance) to the last moment the collision can be avoided (red, sharp avoidance).
	}
	\label{fig:trajectories}
\end{figure}

\subsection{Learning From Accidents}
\label{sec:app-accidents}

\begin{figure}
	\begin{center}
		\includegraphics[width=1\columnwidth]{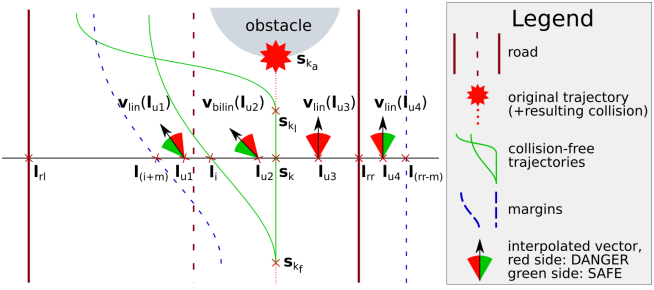}
	\end{center}
	\caption{
		(This figure is copied from the main text to here for completeness.) Illustration of important points and DANGER/SAFE labels from Section~\ref{sec:accident} for a vehicle traveling on the right lane of a straight road, with an obstacle in front.
		Labels are shown for four points $\{ \mathbf{l}_{u1}, \mathbf{l}_{u2}, \mathbf{l}_{u3}, \mathbf{l}_{u4} \}$ illustrating the four possible cases.
	}
	\label{fig:app-points}
\end{figure}

For the following paragraph, we abusively note $\mathbf{s}_k.x$, $\mathbf{s}_k.y$ the position coordinates at state $\mathbf{s}_k$, and $\mathbf{s}_k.vx$, $\mathbf{s}_k.vy$ the velocity vector coordinates at state $\mathbf{s}_k$.
Then, for any state $\mathbf{s}_k \in \{ \mathbf{s}_{k_f}, ..., \mathbf{s}_{k_a} \}$ we can define a line $L(\mathbf{s}_k) = \{ \mathbf{l}_u = \vectw{\mathbf{s}_k.x}{\mathbf{s}_k.y} + u \times \vectw{-\mathbf{s}_k.vy}{\mathbf{s}_k.vx} \: | \: u \in \mathbb{R} \}$.
On this line, we note $\mathbf{l}_i$ the furthest point on $L(\mathbf{s}_k)$ from $\vectw{\mathbf{s}_k.x}{\mathbf{s}_k.y}$ which is at an intersection between $L(\mathbf{s}_k)$ and a collision-free trajectory from $solve(\mathcal{S})$.
This point determines how far the vehicle can be expected to stray from the original trajectory $\mathcal{S}$ before the accident, if it followed an arbitrary trajectory from $solve(\mathcal{S})$.
We also note $\mathbf{l}_{rl}$ and $\mathbf{l}_{rr}$ the two intersections between $L(\mathbf{s}_k)$ and the road edges ($\mathbf{l}_{rl}$ is on the ``left'' with $rl > 0$, and $\mathbf{l}_{rr}$ is on the ``right'' with $rr < 0$).
These two points delimit how far from the original trajectory the vehicle could be.
Finally, we define a user-set margin $g$ as outlined below (we set $g=0.5~m$).

Altogether, these points and margin are the limits of the region along the original trajectory wherein we generate images for training: a point $\mathbf{l}_u \in L(\mathbf{s}_k)$ is inside the region if it is between the original trajectory and the furthest collision-free trajectory plus a margin $g$ (if $\mathbf{l}_u$ and $\mathbf{l}_i$ are on the same side, i.e. $sign(u) = sign(i)$), or if it is between the original trajectory and either road boundary plus a margin $g$ (if $\mathbf{l}_u$ and $\mathbf{l}_i$ are not on the same side, i.e. $sign(u) \neq sign(i)$).

In addition, if a point $\mathbf{l}_u \in L(\mathbf{s}_k)$ is positioned between two collision-free trajectories $\mathcal{\hat{S}}_{k_1}, \mathcal{\hat{S}}_{k_2} \in solve(\mathcal{S})$,
we consider the two closest states on $\mathcal{\hat{S}}_{k_1}$, and the two closest states from $\mathcal{\hat{S}}_{k_1}$, and bi-linearly interpolate these four states' velocity vectors, resulting in an approximate velocity vector $\mathbf{v}_{bilin}(\mathbf{l}_u)$ at $\mathbf{l}_u$.
Similarly, if a point $\mathbf{l}_u \in L(\mathbf{s}_k)$ is not positioned between two collision-free trajectories, we consider the two closest states on the single closest collision-free trajectory $\mathcal{\hat{S}}_{k_1} \in solve(\mathcal{S})$, and linearly interpolate their velocity vectors, resulting in an approximate velocity vector $\mathbf{v}_{lin}(\mathbf{l}_u)$ at $\mathbf{l}_u$.

From here, we can construct images at various points $\mathbf{l}_u$ along $L(\mathbf{s}_k)$ (increasing $u$ by steps of $0.1~m$), with various orientation vectors (noted $\mathbf{v}_u$ and within 2.5 degrees of $\vectw{\mathbf{s}_k.vx}{\mathbf{s}_k.vy}$), and label them using the following scheme (also illustrated in Figure~\ref{fig:app-points}).
If the expert algorithm made the vehicle avoid obstacles by steering left ($\mathbf{l}_i$ with $i > 0$), there are four cases to consider when building a point $\mathbf{l}_u$:
\begin{itemize}
	\item $u < i + g$ and $u > i$: $\mathbf{l}_u$ is outside of the computed collision-free trajectories $solve(\mathcal{S})$, on the outside of the steering computed by the expert algorithm. The label is SAFE if $det(\mathbf{v}_{lin}(\mathbf{l}_u), \mathbf{v}_u) \geq 0$, and DANGER otherwise.
	\item $u < i$ and $u > 0$: $\mathbf{l}_u$ is inside the computed collision-free trajectories $solve(\mathcal{S})$. The label is SAFE if $det(\mathbf{v}_{bilin}(\mathbf{l}_u), \mathbf{v}_u) \geq 0$ (over-steering), and DANGER otherwise (under-steering).
	\item $u < 0$ and $u > rr$: $\mathbf{l}_u$ is outside the computed collision-free trajectories $solve(\mathcal{S})$ on the inside of the steering computed by the expert algorithm. The label is always DANGER.
	\item $u < rr$ and $u > rr - g$: $\mathbf{l}_u$ is in an unattainable region, but we include it to prevent false reactions to similar (but safe) future situations. The label is DANGER if $det(\mathbf{v}_{lin}(\mathbf{l}_u), \mathbf{v}_u) > 0$, SAFE otherwise.
\end{itemize}

\noindent Here, the function $det(\cdot, \cdot)$ computes the determinant of two vectors from $\mathbb{R}^2$.

Conversely, if the expert algorithm made the vehicle avoid obstacles by steering right ($\mathbf{l}_i$ with $i < 0$), there are four cases to consider when building a point $\mathbf{l}_u$:
\begin{itemize}
	\item $u > i - g$ and $u < i$: the label is SAFE if $det(\mathbf{v}_{lin}(\mathbf{l}_u), \mathbf{v}_u) \leq 0$, and DANGER otherwise.
	\item $u > i$ and $u < 0$: the label is SAFE if $det(\mathbf{v}_{bilin}(\mathbf{l}_u), \mathbf{v}_u) \leq 0$, and DANGER otherwise.
	\item $u > 0$ and $u < rl$: the label is always DANGER.
	\item $u > rl$ and $u < rl + g$: the label is DANGER if $det(\mathbf{v}_{lin}(\mathbf{l}_u), \mathbf{v}_u) < 0$, SAFE otherwise.
\end{itemize}

We then generate images from these (position, orientation, label) triplets which are used to further train the \textit{Detection} module of our policy.

\subsection{Experiment Setup}
\label{sec:app-exp}


\subsubsection{Scenarios}
We have tested our method in three scenarios. The first is a straight road which represents a linear geometry, the second is a curved road which represents a non-linear geometry, and the third is an open ground. The first two represent on-road situations while the last represents an off-road situation. 

Both the straight and curved roads consist of two lanes. The width of each lane is $ 3.75~m $ and there is a $ 3~m $ shoulder on each side of the road. The curved road is half circular with radius at $ 50~m $ and is attached to two straight roads at each end. The open scenario is a $ 1000~m $ $ \times $ $ 1000~m $ ground, which has a green sphere treated as the target for the \textit{Following} module to steer the AV.


\subsubsection{Vehicle Specs}
The vehicle's speed is set to $ 20~m/s $, which value is used to compute the throttle value in the simulator.
Due to factors such as the rendering complexity and the delay of the communication module, the actual running speed is in the range of $ 20\pm1~m/s $.
The length and width of the vehicle are $ 4.5~m $ and $ 2.5~m $, respectively.
The distance between the rear axis and the rear of the vehicle is $ 0.75~m $.
The front wheels can turn up to 25 degrees in either direction.
We have three front-facing cameras set behind the main windshield, which are at $ 1.2~m $ height and $ 1~m $ front to the center of the vehicle.
The two side cameras (one at left and one at right) are set to be $ 0.8~m $ away from the vehicle's center axis.
These two cameras are only used to capture data for training \textit{Following}.
During runtime, our control policy only requires images from the center camera to operate.

\subsubsection{Obstacles}
For the on-road scenarios, we use a scaled version of a virtual traffic cone as the obstacle on both the straight and curved roads.
This scaling operation is meant to preserve the obstacle's visibility,
since at distances greater than $30~m$ a normal-sized obstacle is quickly reduced to just a few pixels.
This is an intrinsic limitation of the single-camera setup (and its resolution),
but in reality we can emulate this ``scaling'' using the camera's zoom function for instance. 
For the off-road scenario, we use a vehicle with the same specifications as of the AV as the dynamic obstacle. This vehicle is scripted to collide into the AV on its default course when no avoidance behavior is applied by the AV. 

\subsubsection{Training Data}
In order to train \textit{Following}, we have built a waypoint system on the straight road and curved road for the AV to follow, respectively. By running the vehicle for roughly equal distances on both roads, we have gathered in total 65~061 images (33~642 images for the straight road and 31~419 images for the curved road). On the open ground, we have sampled 30~000 positions and computed the angle difference between the direction towards the sphere target and the forward direction. This gives us 30~000 training examples.  

In order to train \textit{Avoidance}, on the straight road, we rewind the accident by 74 frames starting from the frame that the accident takes place, which gives us 74 safe trajectories.
On the curved road, we rewind the accident by 40  frames resulting in 40 safe trajectories.
On the open ground, we rewind the accident by 46 frames resulting in 46 safe trajectories.
By positioning the vehicle on these trajectories and capturing the image from the front-facing camera, we have collected 34~516 images for the straight road, 33~624 images for the curved road, and 33~741 images for the open ground. 

For the training of \textit{Detection}, using the mechanism explained in Subsection~\ref{sec:data_detect}, we have collected 32~538 images for the straight road, 71~859 for the curved road, and 67~102 images for the open ground.

\end{document}